\newtheorem{thm}{Theorem}
\def\ran{\rangle}
\def\lan{\langle}
\def\si{\sigma}
\def\eps{\epsilon}
\def\wa{W_1}
\def\wb{W_2}
\def\wat{W_1^t}
\def\wbt{W_2^t}
\def\rb{R_2}
\def\lri{\lambda_{R,i}}
\def\La{\Lambda_1}
\def\Lb{\Lambda_2}
\def\Lt{\Lambda_T}
\def\Lr{\Lambda_R}
\def\la{\lambda_1}
\def\lb{\lambda_2}
\def\lt{\lambda_T}
\def\lr{\lambda_R}
\def\ll{{\mathcal L}}
\def\nn{{\mathcal N}}
\def\ww{{\mathcal W}}
\newcommand{\R}{\mathbb{R}}
\title{Deep learning with asymmetric connections and Hebbian updates}
\author{Yali Amit \\ University of Chicago}
\begin{document}
\maketitle

\begin{abstract}
We show that deep networks can be trained using  Hebbian updates yielding similar performance to ordinary back-propagation on challenging image datasets. To overcome the unrealistic symmetry in connections between layers, implicit in back-propagation, the feedback weights are separate from the feedforward weights. The feedback weights  are also updated with a local rule, the same as the feedforward weights -  a weight is updated solely based on the product of activity of the units it connects.  With fixed feedback weights as proposed in \cite{Lilli} performance degrades quickly as the depth of the network increases.  If the feedforward and feedback weights are initialized with the same values, as proposed in \cite{zipser1990}, they remain
the same throughout training thus precisely implementing back-propagation. We show that  even when the weights are initialized differently and at random,
and the algorithm is no longer performing back-propagation,  performance is comparable on challenging datasets.
We also propose a cost function whose derivative can be represented as a local Hebbian update on the last layer. Convolutional layers are updated with tied weights across space, which is not biologically plausible. We show that similar performance is achieved with untied layers, also known as locally connected layers, corresponding to the connectivity implied by the convolutional layers, but where weights are untied and updated separately. In the linear case we show theoretically that the convergence of the error to zero is accelerated by the update of the feedback weights.
\end{abstract}

\section{Introduction}

The success of multi-layer neural networks (deep networks) in a range of prediction tasks as well some observed
similarities observed between the properties of the network units and cortical units (\cite{DiCarlo}),  has raised the question of whether they 
can serve as models for processing in the cortex \cite{Kriegeskorte2015,Kording}.
The feedforward architecture of these networks is clearly consistent with models of neural computation: a hierarchy of layers, where the units in each layer compute their activity in terms of the weighted sum of the units of the previous layer. The main challenge with respect to biological plausibility is in the way these networks are trained.

Training of feedforward networks is based on a loss function that  compares the output of the top layer of the network to a target.
Small random subsets of training data are then used to compute the gradient of the loss with respect to the weights of the network, and these are then updated by moving a small distance
in  the opposite direction of the gradient. Due to the particular structure of the function represented by these multi-layer networks the gradient is computed using back-propagation - an algorithmic formulation of the chain rule for differentiation \cite{hinton-rumel}. In the feedforward step the
 input is passed {\it bottom-up} through the layers of the network to produce the output of the top layer and the loss is computed. Back-propagation  proceeds top-down
  through the network. Successively two things occur in each layer: first, the unit activity in the layer is updated  in terms of the layer above - feedback, then  the weights feeding into this layer are updated. The gradient  of each weight is a product of the activity of the units it connects - the feedforward pre-synaptic activity of the input unit in the lower layer  and the feedback activity in the post-synaptic unit in the current layer. In that sense the gradient computation has the form of local Hebbian  learning. However, a fundamental element of back-propagation is not biologically plausible as explained in \cite{zipser1990,Lilli}. The feedback activity of a unit is computed as a function of the units in the layer above it in the hierarchy in terms of the {\it same} weight matrix used to compute the feedforward signal, implying a symmetric synaptic connectivity matrix. 
  
  Symmetry of weight connection is an unrealistic assumption. Although reciprocal physical connections between neurons are more common than would be expected at random, these connections are physically separated in entirely different regions of the neuron and can in no way be the same. The solution proposed both in \cite{zipser1990} and in \cite{Lilli} is to create a separate system of feedback connections. The latter model is simpler in that the feedback connections are not updated so that the top-down feedback is always computed with the same weights. The earlier model proposes to update the feedback weights with the same
  increment as the feedforward weights, which  as mentioned above has a Hebbian form. Assuming they are initialized with the same values, they will always have the same value. This guarantees that the back-propagation computation is executed by the network, 
  but in effect reintroduces exact weight symmetry in the back-door, and is unrealistic. In contrast, the computation in \cite{Lilli} does not replicate
  back-propagation, as the feedback weights never change, but the price paid is that in deeper networks it performs quite poorly.
  
  The main contribution of  this paper is to experiment with the idea proposed in \cite{zipser1990}, but initialize the feedforward and feedback weights randomly (thus differently). We call this updated random feedback (URFB).
  We show that  even though the feedback weights are never replicates of the feedforward weights, the network performance is comparable to back-propagation, even with deep networks on challenging
  benchmark dataset such as CIFAR10 and CIFAR100 \cite{cifar}. In contrast, the performance with fixed weights -fixed random feedback (FRFB), as in \cite{Lilli}, degrades with depth.  
   It was noted in \cite{Lilli} that in shallow networks the feedforward weights gradually align with the fixed feedback weights so that in the long
 run an approximate back-propagation is being computed, hence the name {\it feedback alignment}. We show in a number of experiments
 that this alignment phenomenon is much stronger in URFB even for deep networks. However we also
 show that from the very initial iterations of the algorithm, long before the weights have aligned, the evolution of both the training and validation errors is comparable to that of back-propagation. 

In our experiments we  replace the commonly used unbounded rectified linear unit, with a saturated linearity $\si(x) = \min(\max(x,-1),1)$, which is more biologically plausible, as it is not unbounded,
we avoid normalization layers whose gradient is quite complex and not easily amenable to neural computation,
and we run all experiments with the simplest stochastic gradient descent that does not require any memory of earlier gradients. 
 We also experiment with randomly zeroing out half of the connections, separately for feedforward and feedback connections. Thus not only are the
feedforward and feedback weights different, but connectivity is asymmetric. 
In a simplified setting we  provide a mathematical argument for why the error decreases faster with updated feedback weights compared to fixed feedback weights.
  
  Another issue arising in considering the biological plausibility of multilayer networks is how the teaching signal is incorporated in learning.
  The primary loss used for classification problems in the neural network literature is the cross-entropy of the target with respect to the {\it softmax} of the output layer (see section \ref{loss}). The first step in back-propagation is computing
  the derivative of this loss with respect to the activities of the top layer. This derivative, which constitutes the feedback signal to the top layer, involves the computation of the softmax - a ratio of sums of exponentials of the activities of all the output units. It is not a local
  computation and is difficult to model with a neural network. 
  As a second contribution we experiment with an alternative loss, motivated by the original perceptron loss, where the feedback signal is computed locally
  only in terms of the activity of the top-level unit and the correct target signal. It is based on the one-versus all method commonly used with
  support vector machines in the mult-class setting and has been implemented through network models in \cite{amit-vr, fusi-senn, amit-walker}.
   
 Finally, although convolutional layers are consistent with the structure of retinotopic layers in visual cortex, back-propagation through these layers is not biologically plausible. Since the weights of the filters applied across space are assumed identical, the gradient of the unique filter is computed as the sum of the gradients at each location. In the brain the connections corresponding to different spatial locations are physically different and one can't expect them to undergo coordinated updates, see \cite{Lillicrap3}. 
This leads us to the final set of experiments where instead of purely convolutional layers
we use a connectivity matrix that has the sparsity structure inherited from the convolution but the values in the matrix are `untied' undergo independent local updates. Such layers are also called {\it locally connected} layers and have been used in \cite{Lillicrap3} in experiments with biologically plausible architectures. The memory requirements of such layers are much greater
than for convolutional layers, as is the computation, so for these experiments we restrict to simpler architectures. Overall
we observe the same phenomena as with convolutional layers, namely the update of the feedback connections yields performance close to that of regular back-propagation.

 The paper is organized as follows.
In section \ref{rel} we describe related work.
 In section \ref{URFB} we describe the structure of a feedforward network, the back-propagation training
 algorithm and explain how it is modified with separate
 feedback weights. We describe the loss function and explain why it requires only local
 Hebbian type updates.
  In section \ref{exp} we report a number of experiments and illustrate some interesting properties of these networks. We show that performance of URFB is lower but close to back-propagation even in very deep networks, on more challenging data sets that actually require a deep network to achieve good results.
 We show that using locally-connected layers works,  although not as well as convolutional networks,
 and that the resulting filters although not tied apriori show significant similarity across space.
 We illustrate the phenomenon of weight alignment that is much more pronounced in URFB.
 In section \ref{theory} we describe a simplified mathematical framework to study the properties
 of these algorithms and show some
 simulation results that verify 
 that updating the feedback connections yields faster convergence  than  fixed feedback connections. 
 We conclude with a discussion.
 Mathematical results and proofs are provided in the Appendix.
 
\section{Related work}\label{rel}

As indicated in the introduction, the issue of the weight symmetry required for  feedback computation in back-propagation, was already
raised by \cite{zipser1990} and the idea of separating the feedback connections from the feedforward connections was proposed.
They then suggested updating each feedforward connection and feedback connection with the same increment. Assuming all weights are initialized
at the same value the resulting computation is equivalent to back-propagation. The problem is that this reintroduces the implausible symmetry since
the feedback and feedforward weights end up being identical.

In \cite{Lilli} the simple idea of having fixed random feedback connections was explored and found to work well for shallow networks.
However, the performance degrades as the depth of the network increases.
 It was noted that in shallow networks the feedforward weights gradually align with the fixed feedback weights so that in the long
 run an approximate back-propagation is being computed, hence the name {\it feedback alignment}.
In \cite{poggio-BP}  the performance degradation of feedback alignment with depth was addressed by using layer-wise normalization of the outputs. This
yielded results with fixed random feedback FRFB that are close to momentum based gradient descent
of the back-propagation algorithm for certain network architectures. However the propagation of the gradient through the normalization layer is complex
and it is unclear how to implement it in a network.
Furthermore \cite{poggio-BP} showed that a simple transfer of information on the sign of the actual back-propagation gradient yields an improvement on 
using the purely random back-propagation matrix. It is however unclear how such information could be transmitted between different synapses.

In \cite{bogacz} a model for training a multilayer network is proposed using a predictive coding framework.
However it appears that the model assumes symmetric connections, i.e. the strength of the connection
from an error node and a variable in the preceding layer is the same as the reverse connection. 
A similar issue arises  in \cite{fbu}, where in the analysis of their algorithm, they assume
that in the long run, since the updates are the same, the synaptic values are the same. This is approximately true, in the sense that the correlations between feedforward and feedback weights increase but
significant improvement in error rates are observed even early on when the correlations are weak.

\cite{burbank} implements a proposal similar to \cite{zipser1990} in the context of an autoencoder and attempts to find STDP rules that can implement
the same increment for the feedforeward and feedback connections. Again it is assumed that the initial conditions are very similar so that
at each step the feedforward and feedback weights are closely aligned.

In a recently archived paper \cite{Pozzi} also goes back to the proposal in \cite{zipser1990}.
However, as in our paper,  they experiment {with \it different} initializations of the feedforward and feedback connections.
They introduce a pairing of feedback and feedforward units to model the gating of information from the feedforward
pass and the feedback pass. Algorithmically, the only substantial difference to our proposal is in the error signal produced by the output layer, only connections to the output unit representing the correct class are updated. 

Here we show that there is a natural
way to update all units in the output layer so that subsequent synaptic modifications in the back-propagation are all Hebbian.
The correct class unit is activated at the value 1 if the input is below a threshold, and the other classes are activated
as $-\mu$ if the input is above a threshold. Thus, corrections occur through top-down feedback in the system when the inputs of
any of the output units are not of sufficient magnitude and of the correct sign.
We show that this approach works well even in much deeper networks with several convolutional layers and with more challenging
data sets. We also present a mathematical analysis of the linearized version of this algorithm and show that
the error converges faster when the feedback weights are updated compared to when they are held fixed as in \cite{Lilli}.

\cite{Lee,Lillicrap3} study target propagation where
an error signal is computed in each hidden unit as the difference between the feedforward activity of 
that unit and a target value propagated from  above with feedback connections that are separate from the feedforward
connections. The feedback connections between each two consecutive layers are trained to approximate the inverse
of the feedforward function between those layers, i.e. the non-linearity applied to the linear transformation of the lower layer.
In \cite{Lillicrap3} they analyze the performance of this
method on a number of image classification problems and use locally connected layers instead of convolutional layers.
In target propagation the losses for both the forward and the backward connections rely on magnitudes of differences between signals
requiring a more complex synaptic modification mechanism than simple products of activities of pre and post synaptic neurons as proposed
in our model.

Such synaptic modification mechanisms are studied in \cite{Lillicrap2}. A biological model for the neuronal units is presented that combines the feedforward
and feedback signals within each neuron, and produces of an error signal assuming fixed feedback weights as in \cite{Lilli}.
The idea is to divide the neuron into two separate compartments one computing feedforward signals and one computing
feedback signals, with different phases of learning involving different combinations of these two signals.
In addition to computing an error signal internally to the neuron this model  avoids the need to compute signed errors, which imply negative as well as positive neuronal
activity. However this is done by assuming the neuron can internally compute the difference in average voltage between two time
intervals. In \cite{Sacr-Senn} this model is extended to include an inhibitory neuron attached to each hidden unit neuron with plastic synaptic connections to and from the hidden unit. They claim that this eliminates the need to compute the feedback error in separate phases form the feedforward error. 

In our model we simply assume that once the feedforward phase is complete the feedback signal {\it replaces} the feedforward signal
at a unit - at the proper timing - to allow for the proper update of the incoming feedforward and outgoing feedback synapses.

\section{The updated random feedback algorithm}\label{URFB}

In this section we first describe the structure of a multilayer network, how the back-propagation algorithm works
and how we modify it to avoid symmetric connections and maintain simple Hebbian updates to both feedforward and feedback connections.
We then describe a loss function, whose derivatives can
be computed locally, yielding a Hebbian input dependent update of the
weights connecting to the final output layer. 
 
 \begin{figure}[h]
\center
\includegraphics[height=3in]{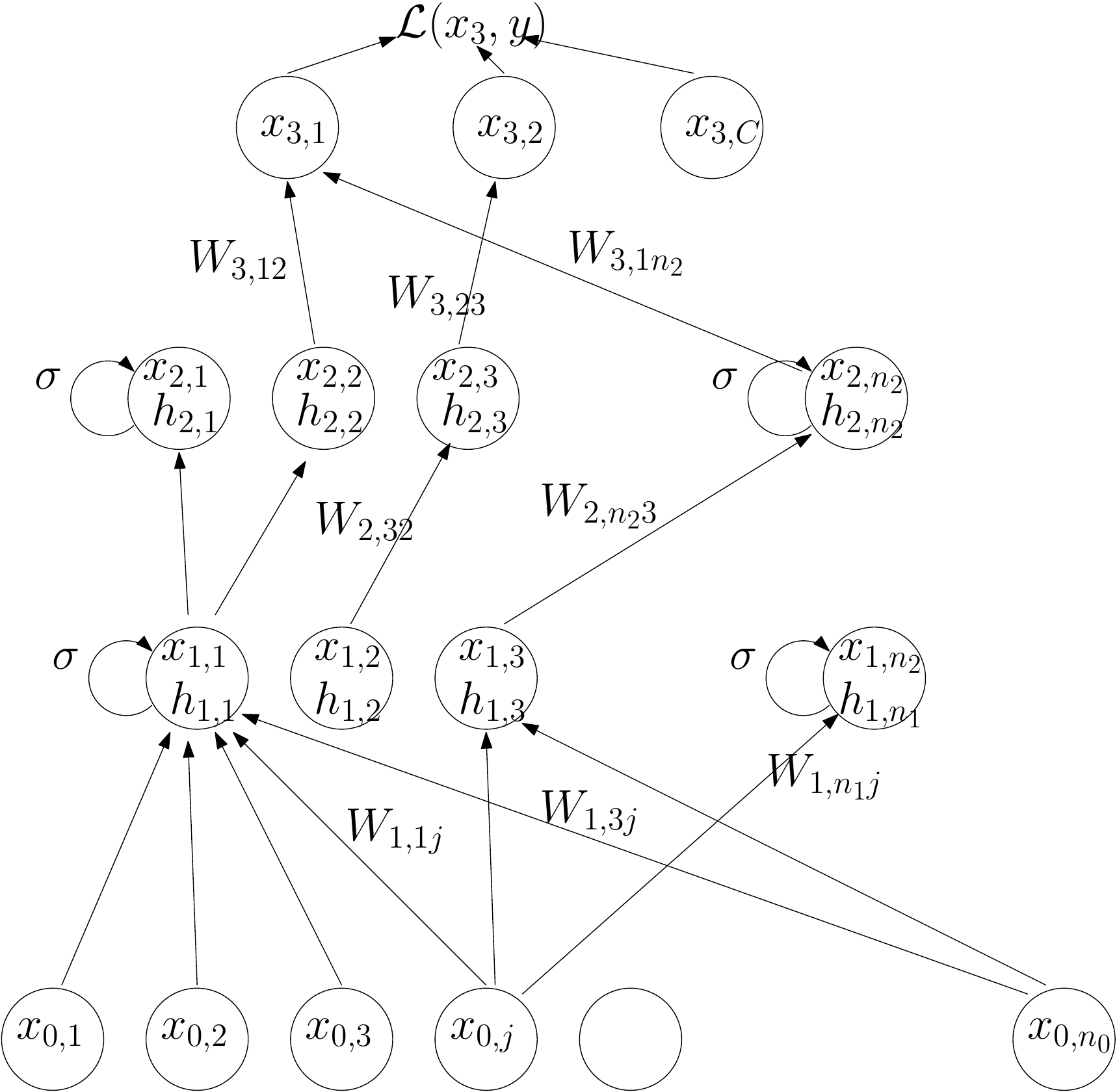}
\caption{An illustration of the computations in a feedforward network}\label{feedfor}.
\end{figure}

\subsection{Updated asymmetric feedback connections}\label{as}

A multi-layer network is composed of a sequence of layers $0,\ldots,L$. The data at the input layer is denoted $x_0$.
Each layer is composed of $n_l$ units.
Let $W_{l,ij}$ be the feedforward weight connecting unit $j$ in layer $l-1$ to unit $i$ in layer $l$.
Let $x_l, l=1,\ldots,L$ be the output of layer $l$, this is computed as
\begin{align}\label{ff}
x_{l,i} = \sigma(h_{l,i}), \quad h_{l,i} = & \sum_{j=1}^{n_{l-1}} W_{l,ij} x_{l-1,j}, i=1,\ldots,n_l. \nonumber \\
\text{or} \quad \quad \quad  h_l = &W_l x_{l-1},
\end{align}
where $\sigma$ is some form of non-linearity and $W_l$ is the $n_l \times n_{l-1}$ matrix of weights connecting layer $l-1$ to layer $l$. We denote $h_{l,i}$ the input of unit $i$ of layer $l$.
For classification problems with $C$ classes the top layer $L$, also called the output layer,
 has $C$ units $x_{L,1},\ldots,x_{L,C}$. In this last layer no non-linearity is applied, i.e. $x_{L,i} = h_{L,i}.$
For given input $x_0$ we can write $x_L = \nn(x_0,\ww)$, where $\nn$ represents the 
function computed through the multiple layers of the network with the set of weights $\ww$.
The classifier is then defined as:
$$  \hat c(x_0) = \text{argmax}_{i} x_{L,i} = \text{argmax}_i \nn(x_0,\ww).$$
A feedforward network with 3 layers is shown in figure \ref{feedfor}

We define
a loss $\ll(x_L,y,\ww)$ comparing the activity of the output layer to a target value,
an indicator vector denoting the correct class of the input.
At each presentation of a training example the derivative $\partial \ll/ \partial W_{l,ij}$ of the loss with respect to each
weight is computed, and the value of the
weight is updated as
$$W_{l,ij} = W_{l,ij} - \eta \partial \ll/ \partial W_{l,ij},$$
where $\eta$ is a small scalar called the time-step or learning rate.
This is done in two phases. In the first phase, the feedforward phase, the input $x_0$ is presented at layer $l=0$ and passed
successively through the layers $l=1,\ldots,L$ as described in \eqref{ff}. 
In the second phase the derivatives are computed starting with $W_{L,ij}$ for the top layer and
successively moving down the hierarchy. At each layer the following two equalities hold due to the chain rule for differentiation:
\begin{align*}
\frac{\partial \ll}{\partial W_{l,ij}} = & \frac{\partial \ll}{\partial h_{l,i}} \frac{\partial h_{l,i}}{\partial W_{l,ij}} = 
\frac{\partial \ll}{\partial  h_{l,i}} x_{l-1,j} \nonumber \\
\frac{\partial \ll}{\partial h_{l,i}} = &\sigma'(h_{l,i}) \sum_{k=1}^{n_{l+1}} \frac{\partial \ll}{\partial h_{l+1,k}} W_{l+1,ki}.
\end{align*}
If we denote $\delta_{l,i} = \frac{\partial \ll}{\partial  h_{l,i}}$ we can write this as:
\begin{align}\label{ffa}
\frac{\partial \ll}{\partial W_{l,ij}} = & \delta_{l,i} x_{l-1,j}\nonumber \\
\delta_{l,i} = &\sigma'(h_{l,i}) \sum_{k=1}^{n_{l+1}}  \delta_{l+1,k} W_{l+1,ki}. \nonumber \\
\text{or} \quad \quad \quad \delta_l = &\sigma'(h_l) W_{l+1}^t \delta_{l+1},
\end{align}
where $\sigma'(h_l)$ is the diagonal matrix with entries $\sigma'(h_{l,i})$ on the diagonal.
So we see that the update to the synaptic weight $W_{l,ij}$ is the product of the {\it feedback} activity at unit $i$ of layer $l$ 
denoted by $\delta_{l,i}$, also called the {\it error signal}, and the input activity from unit $j$ of layer $l-1$. And the feedback activity (error signal) 
of layer $l$
is computed in terms of the feedforward weights connecting unit $i$ in layer $l$ to all the units in layer $l+1$. This is the symmetry
problem. 

We now separate the feedforward weights from the feedback weights. Let $R_{l+1,ik}$ be the feedback weight connecting unit $k$ of layer $l+1$ to unit
$i$ of layer $l$. The the second equation in \eqref{ffa} becomes:
$$\delta_{l,i} = \sigma'(h_{l,i}) \sum_{k=1}^{n_{l+1}}  \delta_{l+1,k} R_{l+1,ik}.$$
If $R=W^t$ we get the original back-propagation update.
We illustrate the general updating scheme computation in figure \ref{backprop}.

In \cite{Lilli} the values of $R$ are held fixed at some random initial value, which we denote {\it fixed random feedback} (FRFB).
In contrast, in our proposal, since $R_{l+1,ik}$ connects the same units as $W_{l+1,ki}$ it experiences
the same pre and post-synaptic activity and so will be updated by the same Hebbian increment -
$ \delta_{l+1,k} x_{l,i}.$ 
We call this method {\it updated
random feedback} - URFB.
If the initial values of $R_{l,ik}$ are the same as $W_{l,ki}$
then equality will hold throughout the update iterations resulting
in a symmetric system performing precise back-propagation. 
This is the proposal in \cite{zipser1990}. We experiment 
with different initializations, so that the updates are not performing back-propagation,
even in the long run after many iterations the weights are not equal, although their correlation increases.
We show that classification rates remain very close to those obtained by
back-propagation.
In addition, in order to increase the plausibility of the model we also experiment with sparsifying the
feedforward and feedback connections by randomly fixing half of each set of weights at 0.

\begin{figure}[h]
\center
\includegraphics[height=3in]{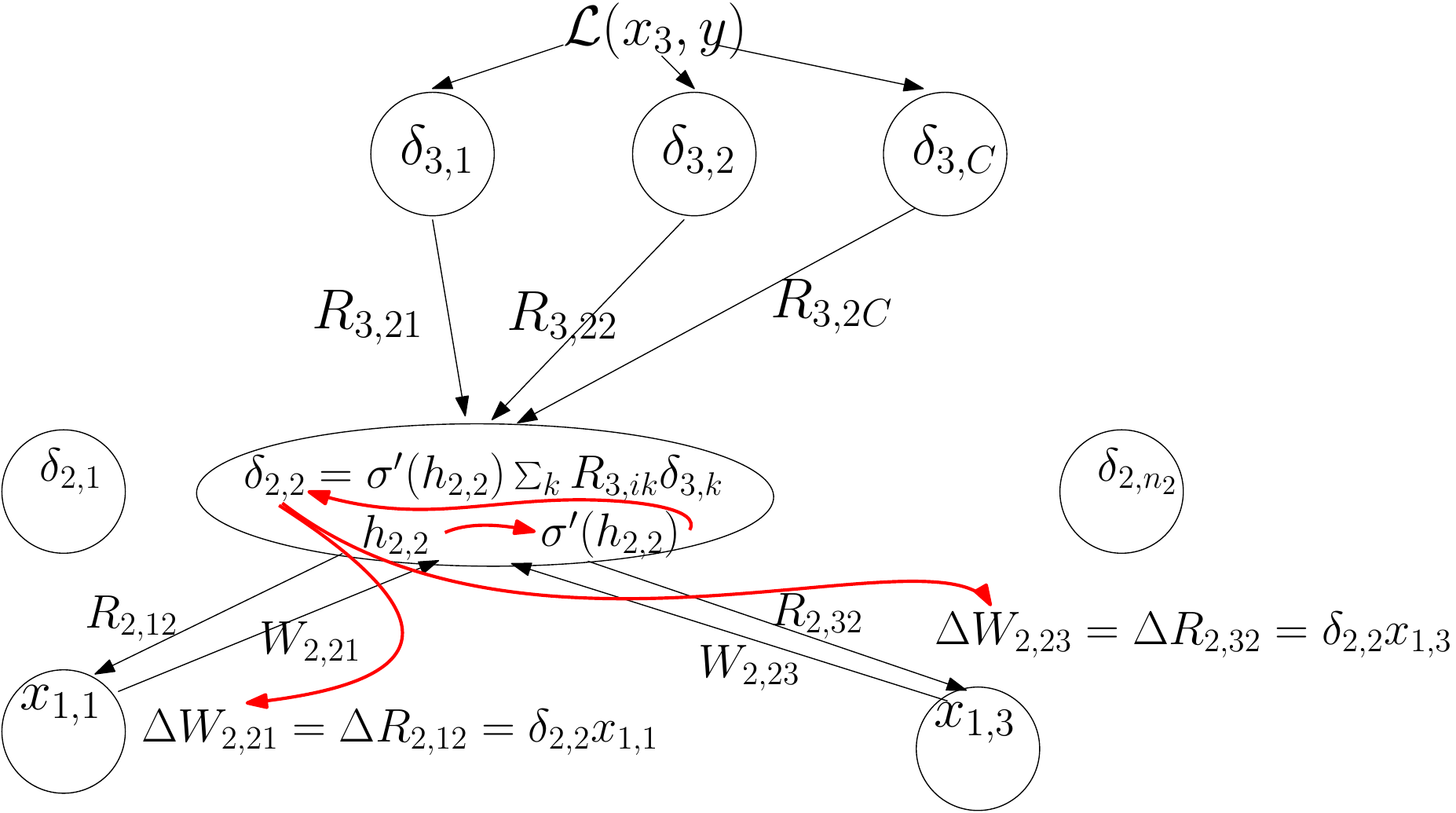}
\caption{The feedback signals $\delta_{3,k}$ from layer $3$ are combined linearly and
then multiplied by $\si'(h_{2,2})$ to produce the feedback signal $\delta_{2,2}$. Then the update to the feedforward weights coming into
unit  $(2,2)$ and feedback weights coming out of that unit is computed.The red arrows
indicate the order of computation.}\label{backprop}
\end{figure}

{\bf Remark 1:}
It is important  to note that the feedback activity $\delta_{l,i}$ replaces the feedforward activity $x_{l,i}$ and
needs to be computed before the update of the feedforward weights
feeding into unit $i$ and the feedback weights feeding out of that unit, but using the original {\it feedforward}
activity $x_{l-1,i}$ of the units in layer $l-1$. This requires a very rigid sequencing of the algorithm from top to bottom. 

{\bf Remark 2:}  The feedback signal propagates by computing a linear combination of the feedback
signals in the higher layers, but is then multiplied by the term $\sigma'(h_{l,i})$.
To simplify as much as possible we have employed a non-linearity $\sigma$ of the form
$$\si(h) = \max(-1,\min(1,h)),$$ which is simply a saturated linear function at thresholds
$-1$ and $1$,
and
$\si'(h) = 1$ if  $|h| \le 1$ and   0 otherwise.
Thus the feedback activity $\delta_{l,i}$ is the linear combination of the feedback activities $\delta_{l+1,k}$ in the layer above
unless 
\begin{equation}\label{delthresh}
| h_{l,i} | \ge 1, \text{or } |x_{l,i}|=1.
\end{equation}
i.e. bottom-up input $h_{l,i}$ is too high or too low, in which case $\delta_{l,i}=0$. 
A local network to compute this thresholding is described in Appendix 1.
The computation of the top level derivative $\delta_{L,i} = \partial \ll / \partial h_{L,i}$ will be discussed
in the next section.

\subsection{Loss function}\label{loss}

The softmax loss commonly used in deep learning defines the probability of each output class as a function of the
activities $x_{L,i}$ as follows:
$$ \text{softmax}(x_L)_c = p_c = \frac{e^{x_{L,c}}}{\sum_{i=1}^C e^{x_{L,i}}}, c=1,\ldots,C.$$
The loss computes the negative log-likelihood of these probabilities:
$$\ll(x_L,y) = -\sum_{i=1}^C x_{L,i} y_i + \log \sum_{i=1}^C e^{x_{L,i}},$$
where $y_c = 1$ if the class of the input is $c$ and $y_i=0, i \ne c$.
Thus the initial feedback signal is:
$$ \delta_{L,i} = \frac{\partial \ll(x_L,y)}{\partial x_{L,i}} = y_i-p_i.$$
This requires the computation of the softmax function $p_i$, which involves the activity of all other units, as
well as exponentiations and ratios. 

The classification loss  function used here is motivated by the hinge loss used in standard linear SVMs.
 In the simplest case of a
 two class problem we code the two classes as a scalar $y=\pm 1$ and use only one output unit $x_L$.
 Classification is based on the sign of $x_L$.
 The loss is given by
\begin{equation*}
\ll(x_L,y)=\max(1-x_Ly,0).
\end{equation*}
The derivative of this loss with respect to $x_L$, is simply
$$\frac{\partial \ll}{\partial x_L}= \begin{cases} -y & \text{if } y \cdot x_L \le 1 \\
0 & \text{otherwise}.
\end{cases}
$$
The idea is to that the output $x_L$ should have the same sign as $y$ and be sufficiently
large in magnitude. Once the output achieves that, there is no need to change it and the loss is zero.

Writing $x_L = W^t x_{L-1}$, this yields the perceptron learning rule {\it with margin} (see  \cite{srebro}):
\begin{equation*}
\frac{\partial \ll}{\partial W_i} = \begin{cases} -x_{L-1,i} & \mbox{if} \ y=1 \ \mbox{and} \ W^t x_0 \le 1 \\
x_{L-1,i} & \mbox{if} \ y=-1 \ \mbox{and} \ W^t x_{L-1}\ge -1 \\
0 & \mbox{otherwise}
\end{cases},
\end{equation*}

If we think of the supervised signal as activating the output unit with $\delta_L=+1$ for one class and $\delta_L=-1$ for the other, 
unless the input is already of the correct sign and of magnitude greater than 1, then
$\delta_L = -\partial \ll/\partial x_L$.
The update rule can be rewritten as $W_i \leftarrow W_i + \eta \Delta W_i$
where $\Delta W_i = \delta_L\cdot x_{L-1,i}$ if  $x_L= W^t x_{L-1}$ satisfies $\delta_L x_L  \le 1.$
In other words if the output $x_L$ has the correct sign by more than the margin of {\bf 1} then no update occurs,
otherwise the weight is updated by the product of the target unit activity and the input unit activity.
In that sense the update rule is Hebbian, except for shut down of the update when $x_L$ is `sufficiently correct'.

One might ask why not use the unconstrained Hebbian update  $\Delta W_i =\eta \delta_L  x_{L-1,i}$, which corresponds to a loss that computes the inner product
of $y$ and $x$.  Unconstrained
maximization of the inner product can yield over fitting in the presence of particularly large values of some of the coordinates of $x$ and  create an imbalance between the
two classes if their input feature distribution is very different. This becomes all the more important with multiple classes, which we discuss next.

For multiple classes we generalize hinge loss as follows. 
Assume as before $C$ output units $x_{L,1},\ldots,x_{L,C}$. For an example $x$, of class $c$ 
define the loss
\begin{equation}\label{cost}
\ll(x_L,y) =  \max(1-x_{L,c},0) + \mu \sum_{i \ne c} \max(1+x_{L,i},0).
\end{equation}
where $\mu$ is some
balancing factor. 
The derivative has the form:
\begin{equation}\label{delta_out}
\frac{\partial{\ll(x_L,y)}}{\partial x_{L,i}} = \begin{cases} -1 & \text{ if } i=c  \text{ and } x_{L,i} \le 1 \\
\mu &   \text{ if }  i \ne c \text{ and } x_{L,i} \ge -1 \\
0 & \text{ otherwise.}
\end{cases}
\end{equation}
Henceforth we will set $\delta_{L,i} = - \partial{\ll(x_L,y)}/{\partial x_{L,i}}.$ Substituting
the feedback signal $\delta_{L,i}$ for the feedforward signal $x_{L,i}$ at the top layer
has the following simple form:
\begin{equation}\label{delta_outa}
\delta_{L,i} = \begin{cases} 1 & \text{ if } i=c  \text{ and } x_{L,i} \le 1 \\
-\mu &   \text{ if }  i \ne c \text{ and } x_{L,i} \ge -1 \\
0 & \text{ otherwise.}
\end{cases}
\end{equation}
and equation \eqref{ffa} is then applied to compute the feedback to layer $L-1$ - $\delta_{L-1}$ and the
update of the weights $W_{L}, R_{L}.$
All experiments below use this rule. 

Note that $\delta_{L,i}$ is precisely the target signal, {\it except} when the feedforward signal has the correct
value - greater than 1 if $i=c$ (the correct class) and less than $-\mu$ for $i \ne c$ (the wrong class). 
This error signal only depends on the target value and input to unit $i$, no information is needed regarding
the activity of other units.
One can ask whether a neuron can produce such an output, which depends both on the exterior teaching signal and on the feedforward
activity. In Appendix 1 we propose a local network that can perform this computation.

This loss produces  the well known one-versus-all method for multi-class SVM's (see for example \cite{ova}),
where for each class $c$ a two class  SVM is trained for class $c$ against all the rest lumped together as one class.
Classification is based on the maximum output of the $C$  classifiers. 
Each unit $x_{L,c}$ can be viewed as a classifier of class $c$ against all the rest. When an example
of class $c$ is presented it updates the weights to obtain a more positive output, when an example
of any class other than $c$ is presented it updates the weights to obtain a more negative output.
Other global multiclass losses for SVM's can be found in \cite{ova}.
In \cite{amit-vr,amit-walker} a network of binary neurons with discrete synapses was described that implements this learning
rule to update connections between discrete neurons in the input and output layers and with positive synapses.
Each class was represented by multiple neurons in the output layer. Thus classification was achieved through recurrent dynamics in the output layer,
where the class with most activated units maintained sustained activity, whereas activity in the units corresponding to other classes died out.

\section{Experiments}\label{exp}
\begin{figure}[h]
\center
\includegraphics[width=3in]{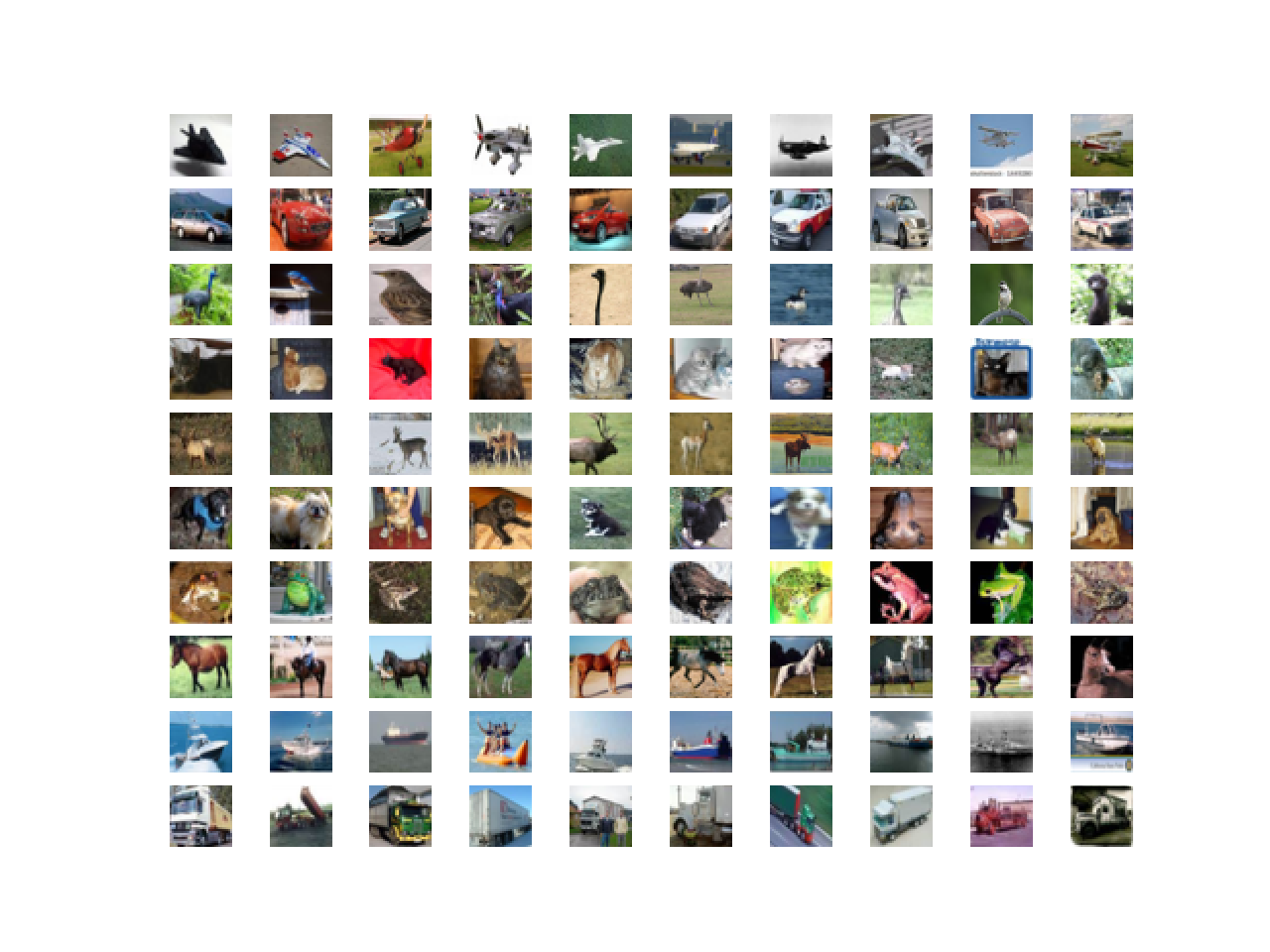}%
\includegraphics[width=3in]{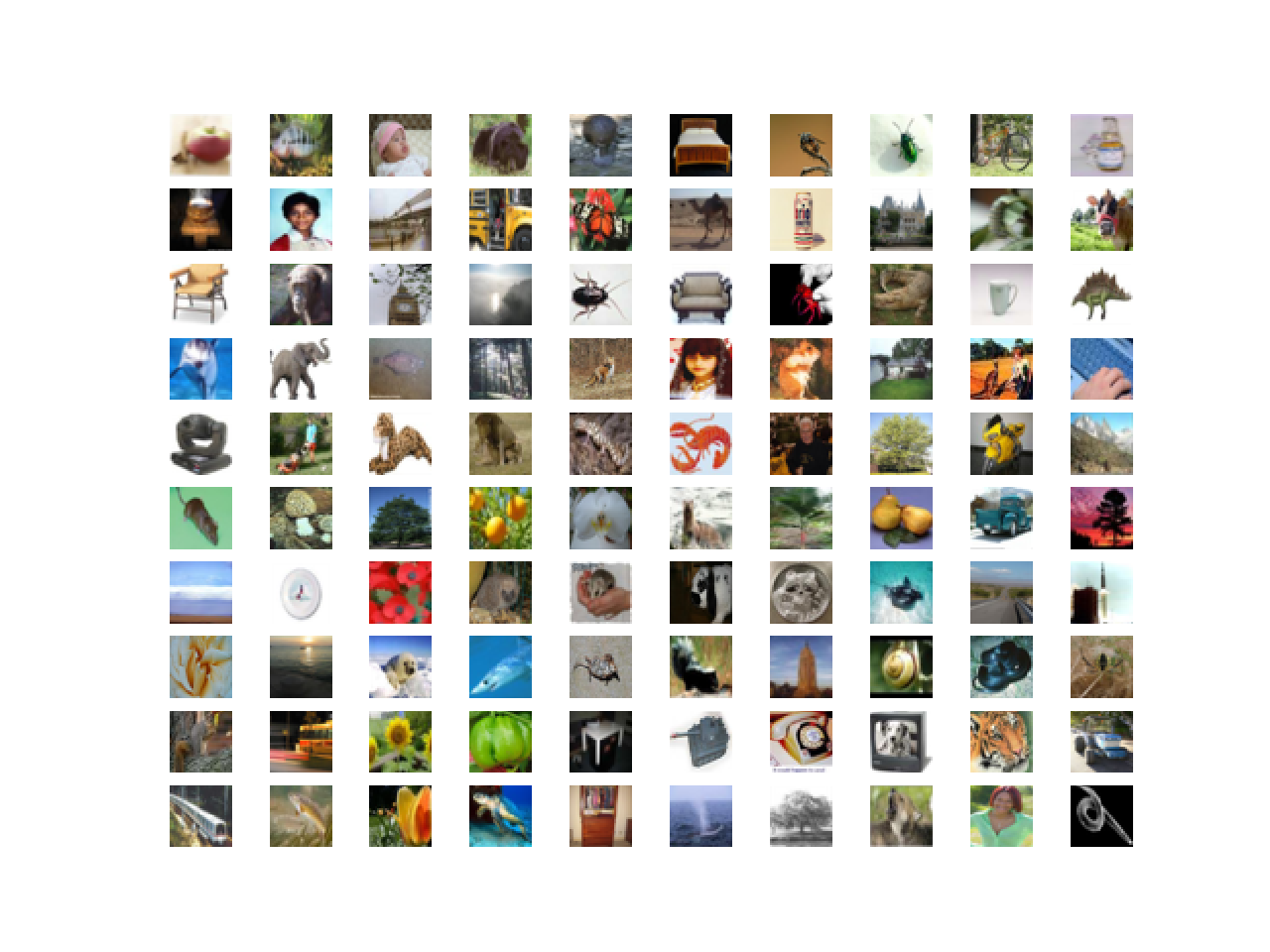}
\caption{Left: Each row showing 10 images from one of the 10 cifar10 classes.
Right: One image from each of the 100 classes in cifar100}\label{cifar}
\end{figure}

We report a number of experiments comparing the updated (URFB) to the fixed feedback matrix (FRFB)
and comparing the multi-class hinge loss function to the cross-entropy with softmax loss. 
We restrict ourselves to image data. Since it is quite easy to obtain good results
with the widely used MNIST handwritten data set \cite{mnist} we focus on two more challenging
data sets called CIFAR10 and CIFAR100 \cite{cifar}. Each dataset contains 32x32 color images from 10 classes
for the first and 100 classes for the second. The classes are broadly defined so that the category
bird will contain multiple bird types at many different angles and scales. Some sample images
are show in figure \ref{cifar}.
Each data set has 50000 training images and 10000 test images.
 An even more challenging
data set is IMAGENET \url{http://www.image-net.org/}, which contains thousands of classes. We do not experiment with this
dataset as the images are much larger and the training set is much larger so that training is time consuming.

There are a number of benchmark network architectures that have been developed over the past
decade with good results on these datasets, see (\cite{vgg16,alexnet,resnet}). These networks
are very deep and employ a variety of methods to accelerate convergence, such as adaptive time-steps
and batch normalization. These improvements involve steps that are not easily modeled as neural computations. For that reason we restrict our learning method to the simplest form of gradient descent
with a fixed time step and no normalization. We do not perform any pre-processing of the input data,
nor do we employ any methods of data augmentation to improve classification results.
All our weights are initialized based on the method described in \cite{glorot}. Weights are uniformly drawn between
$[-b_l,b_l]$ where $b_l$ is a function of the number of incoming and outgoing connections to a unit in layer $l$.

In the experiments we demonstrate the following:
\begin{enumerate}
\item With regular back-propagation (BP) hinge loss performs slightly worse than the softmax loss but results are comparable.
\item For shallow networks URFB performs somewhat better then FRFB but mainly converges faster.
It never performs as well as BP but is close.
\item For deeper networks URFB again performs close to BP but FRFB performance degrades significantly.
\item With locally connected - untied - layers instead of convolutional layers results are slightly worse overall but
the relationship between the different methods is maintained.
\item In URFB the feedback weights are never the same as the feedforward weights, although
the correlation between the two sets of weights increases with iteration. 
\item Even in initial iterations, when the weights are far from being aligned, training and validation error rates decrease at similar rates to back propagation.
\end{enumerate}

We first experiment with a shallow network with only two hidden layers, one convolutional and one fully connected.
\noindent
\small{
\begin{verbatim}
simpnet: Conv 32 5x5; Maxpool 3; Drop .8; Full 500; Drop .3; Output 
\end{verbatim}
}
The notation \texttt{Conv 32 5x5} means that we have 32 - 5x5 filters, each applied as a convolution
to the input images, producing 32 output arrays of dimension 32x32. \texttt{Maxpool 3} means that at each pixel
the maximal value in the 3x3 window centered at that pixel is substituted for the original value (padding with 0's outside the grid), 
in each of the 32 output arrays, and then only every second pixel is recorded producing 32 arrays of size 16x16.
\text{Drop .8} means that for each training batch, a random subset of 80\% of the pixels in each array are set to 0 so that
no update occurs to the outgoing synaptic weights. This step was introduced in \cite{dropout}  as a way to avoid
overfitting to the training set. It is also attractive as a model for biological learning as clearly not all synapses will
update at each iteration.
\texttt{Full 500} means a layer with 500 units, each connected
to every unit in the previous layer. 

\begin{figure}[h]
\center
\includegraphics[width=.7\textwidth]{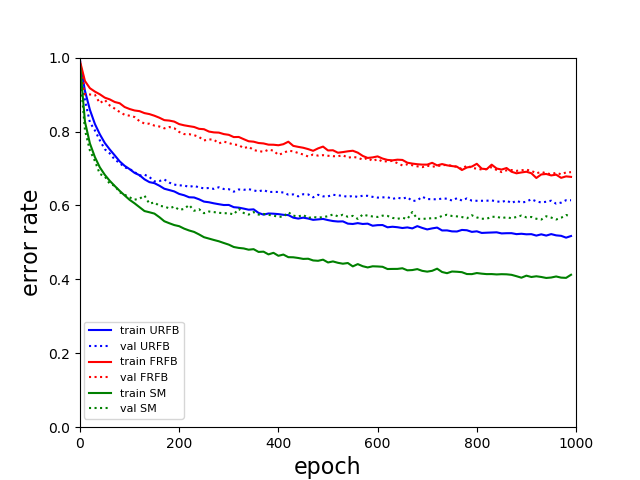}
\caption{Evolution of error rates for \texttt{simpnet} as a function of epochs. 
Solid lines training error, dotted lines validation error. Green - BP, Blue - URFB, Red - FRFB
\label{simple}}
\end{figure}

The \texttt{Output} layer has $C$ output units corresponding to each class.
We use the  saturated linearity $\sigma(x)  = \min(\max(x,-1),1)$ and the hinge loss function as given in \eqref{cost}.
The update is a simple SGD with a fixed time step of .1, and the network is trained for 1000 epochs
with batch-size of 500. We make a point to avoid any adaptive normalization
layers as these require a complex gradient that is not amenable to simple
neural computations. We avoid the more sophisticated time step adaptations which depend on previous updates and some normalizations,
which again do not seem amenable to simple neural computations.

The three parameters we adjusted were the time step and two drop out rates. 
We experimented with time-steps .01, .1 and 1.0 for the \texttt{simpnet} and found the best behavior on a held out validation set of 5000 samples
was with the value .1. We kept this value for all further experiments. We had two dropout layers in each network. One between convolutional
layers and one before the output layer. The values were adjusted by running a few tens of iterations and making sure the validation loss
was closely tracking the training loss.  

We also experiment with pruning the forward and backward connections randomly by 50\%. 
In other words half of these connections are randomly set to 0. Error rates for CIFAR10 and CIFAR100 datasets are shown in figure \ref{T1}. 
We  note
 that the use of the  multi-class hinge loss  leads to only a small  loss in accuracy relative to softmax. All experiments
 with random feedback are performed with the hinge loss.
For CIFAR10 the difference between R fixed - FRFB - and R updated - URFB - is small, but 
becomes more significant when connectivity is reduced to 50\% and with the CIFAR100 database.

\begin{figure}[h]
\includegraphics[width=.5\textwidth]{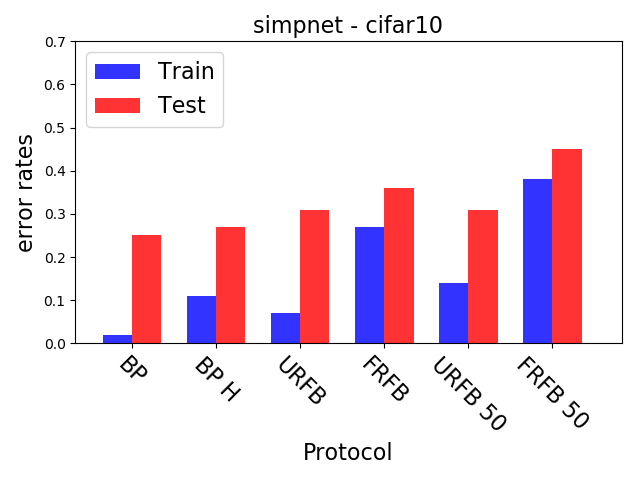}
\includegraphics[width=.5\textwidth]{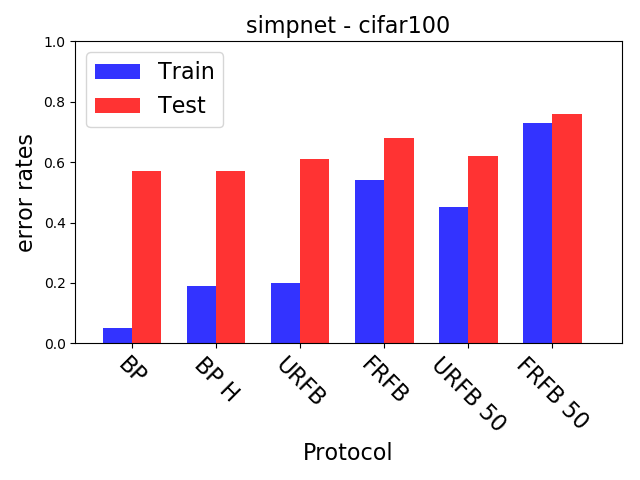}
\caption{\small Error rates for \texttt{simple} network with different update protocols and different losses.
Left: CIFAR10, Right: CIFAR100.
BP - back-propagation with softmax and cross entropy loss, BP-H - back propagation with
hinge loss, all other protocols use the hinge loss as well.
URFB - Updated random feedback and, FRFB - Fixed random feedback. 
50\% refers to random  connectivity. 
\label{T1}}
\end{figure}
Note that in the simple network the only layer propagating back an error signal is the fully connected layer.
The first layer, which is convolutional, does not need to back-propagate an error. The evolution of error rates for
the different protocols as a function of iteration can be seen in figure \ref{simple}.

We experiment with a deep network with multiple convolutional layers, and observe an even larger
difference between R fixed and R  updated. With the deep network FRFB performs very poorly.
The deep architecture
is given here.

\small{
\begin{verbatim}
deepnet: Conv 32 5x5; Maxpool 3; Conv 32 3x3; Conv 32 3x3; Maxpool 3; Drop .8;
         Conv 32 3x3; Conv 32 3x3; Maxpool 3; Drop .3; Full 500; Output 
\end{verbatim}
}
Finally we try an even deeper network with residual layers as in \cite{resnet}.
This means that after every pair of consecutive convolutional layers at the same
resolution we introduce a layer that adds the two previous layers with no trainable
parameters. This architecture was found to yield improved results 
on a variety of datasets. 
\small{
\begin{verbatim}
deepernet: conv 16 3x3; conv 16 3x3; SUM; conv 32 3x3; conv 32 3x3; 
        SUM; maxpool 3; drop .5; conv 64 3x3; conv 64 3x3; SUM; maxpool 3;
        conv 128 3x3; conv 128 3x3; SUM; maxpool 3; drop .8;
        full conn. 500; output 
\end{verbatim}
}%

\begin{figure}[h]
\includegraphics[width=.5\textwidth]{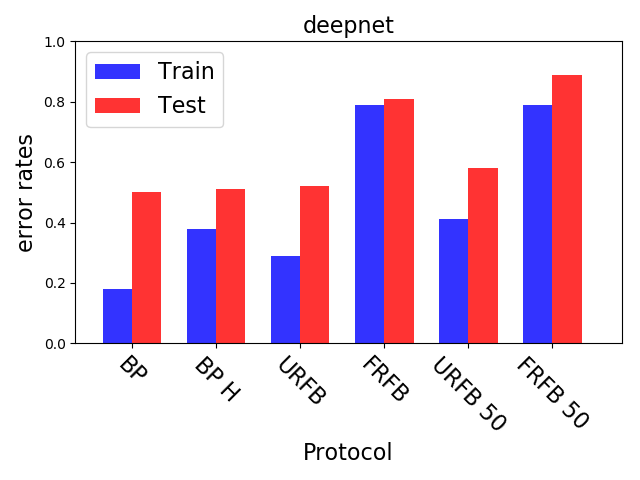}
\includegraphics[width=.5\textwidth]{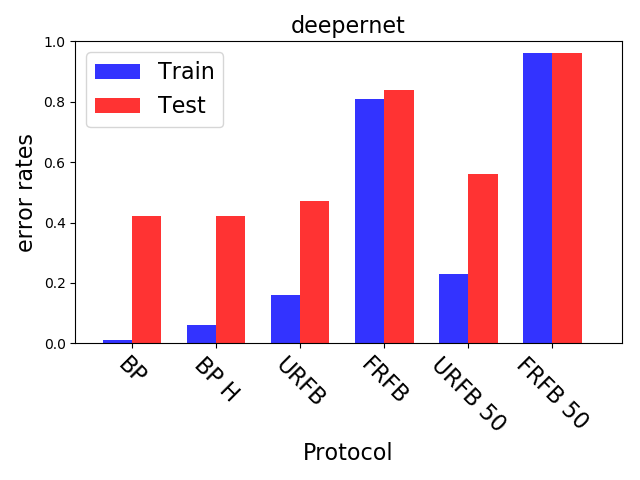}
\caption{\small Error rates for the \texttt{deepnet} -left and \texttt{deepernet} -right.
BP - back-propagation with softmax and cross entropy loss, BP-H - back propagation with
hinge loss, all other protocols use the hinge loss as well.
URFB - Updated random feedback and, FRFB - Fixed random feedback. 
50\% refers to random  connectivity.  \label{deepest}
}
\end{figure}

We see in figure \ref{deepest} that for the default BP with softmax or hinge loss the
error rate decreases from 50\% with \texttt{deepnet} to 42\% with \texttt{deepernet}. URFB also shows a decrease in error between
\texttt{deepnet} and \texttt{deepernet} and again
FRFB performs very poorly.
The evolution of error rates for
the different protocols as a function of iteration can be seen in figure \ref{fig-deepest}.
\begin{figure}[h]
\center
\includegraphics[width=.7\textwidth]{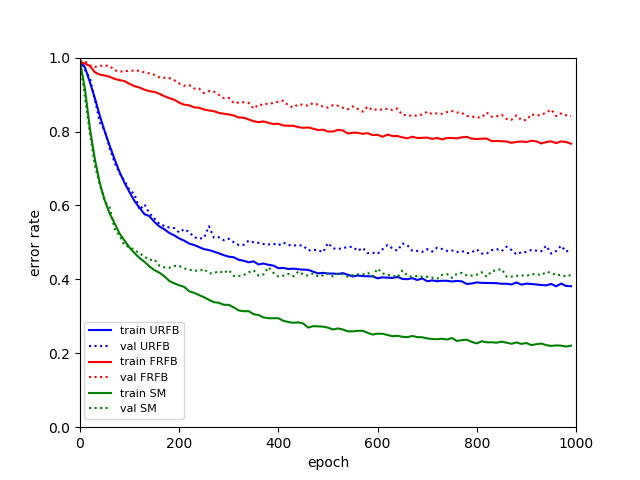}
\caption{Evolution of error rates for \texttt{deepernet} as a function of epochs. 
Solid lines training error, dotted lines validation error. Green - BP, Blue - URFB, Red - FRFB. \label{fig-deepest}}
\end{figure}

\subsection{Untying the convolutional layers - locally connected layers}
We explore `untied' local connectivities determined by the corresponding
convolutional layer. These are also called locally connected layers \cite{Lillicrap3}.
A convolution corresponds to multiplication by a sparse matrix where the entry values are repeated in each row, but with some displacement.
This again is not plausible because it assumes identical weights across a retinotopic layer.
Furthermore the back-propagation
update of a particular weight in a convolutional layer  computes the {\it sum} of all products $\sum_i \delta_{l,i} x_{l-1,i+k}$,
where $i$ represents  locations on the grid and $k$ is a fixed displacement. So, it assumes that each one of the identical weights
is updated by information summed across the entire layer. 

To avoid these issues with biological plausibility we
instead assume each of the entries of the sparse matrix is updated separately with the corresponding product $\delta_{l,i} x_{l-1,i+k}$.
Only non-zero elements of the sparse matrix, that correspond to connections implied by the convolutional operation are updated.
This is implemented using tensorflow sparse tensor operations, and is significantly slower and requires more memory than the
ordinary convolutional layers.
The error rates are similar to those with the original
convolutional layers even with the deeper networks. 
In figure \ref{unty} for CIFAR10,  we  show a comparison of error rates
between networks with convolutional layers to networks with corresponding untied layers for
the different training protocols. We show comparisons for \texttt{simpnet} 
and \texttt{deepnet\_s} defined below.

\begin{figure}[h]
\begin{center}
\includegraphics[width=1in]{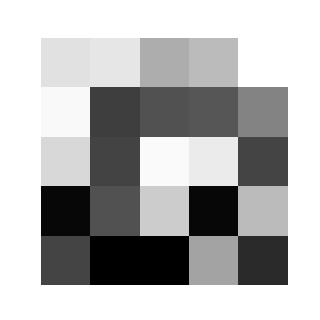}
\includegraphics[width=1in]{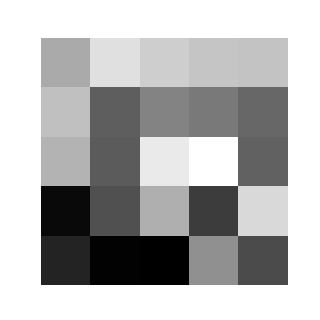}
\includegraphics[width=1in]{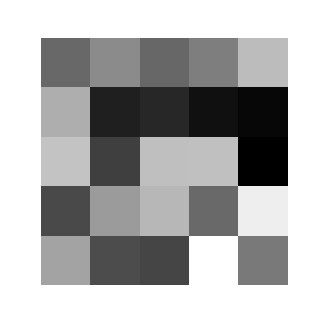}
\includegraphics[width=1in]{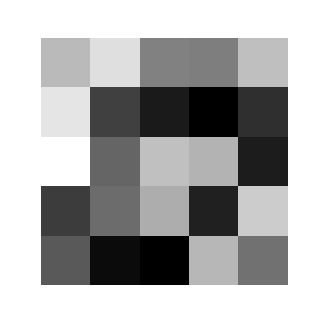}

\includegraphics[width=1in]{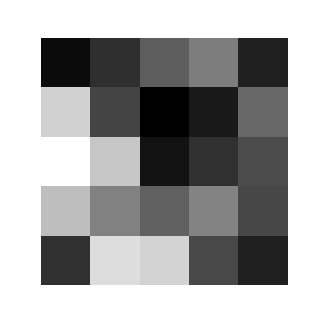}
\includegraphics[width=1in]{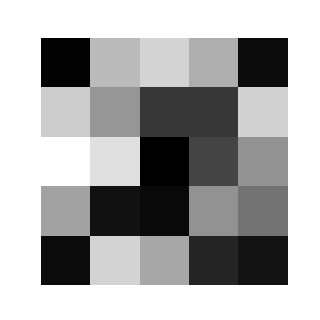}
\includegraphics[width=1in]{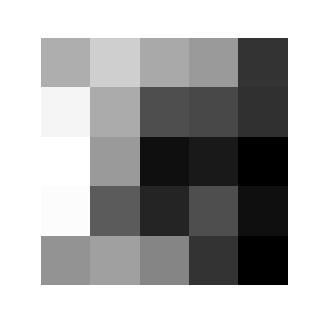}
\includegraphics[width=1in]{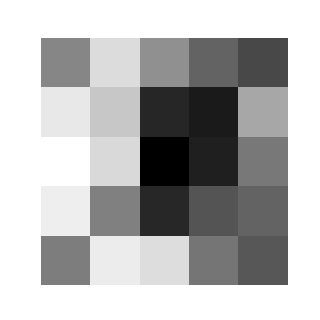}
\end{center}
\caption{Corresponding filters extracted from the sparse connectivity matrix at 4 different locations
on the 32x32 grid.\label{filt}}
\end{figure}

Despite the fact that the weights are updated 
without being tied across space, the final connectivity matrix retains a strong spatial homogeneity. In other words at each location of the output layer 
one can restructure the weights to a filter and inspect how similar these filters are across locations. We presume that this is due to the fact that in the data local
structures are consistent across space.
In figure \ref{filt} we show a couple of these 5x5 filters across four different locations in the 32x32 grid in the trained \texttt{simpnet}.
We see that even after 1000 iterations there is significant similarity in the structure of the filters despite the fact that they were updated independently for each location.

We also experiment with a deeper network:
\small{
\begin{verbatim}
deepnet_s: conv 16 3x3; conv 16 3x3;
	SUM;maxpool 3, stride 3; drop .5; 
        conv 64 3x3; conv 64 3x3; SUM; maxpool 2, stride 2;
        conv 64 2x2; conv 64 2x2; SUM; maxpool 2, stride 2; drop .5;
        full conn. 500; output 
\end{verbatim}
}%
Here we could not run all convolutional
layers as untied layers due to memory constraints on our GPUs. Instead we ran the network for 100 epochs
with the regular convolutional layers,  then we froze the first
layer and retrained the remaining layers from scratch using the untied architecture, see figure \ref{unty}. This would mimic a situation where the first
convolutional layer perhaps corresponding to V1 has connections that are predetermined and not subject to synaptic modifications.
Once more, we see that the untied layers with URFB reach error rates similar to those of the regular convolutional layers with standard gradient descent. And again,
we observe that with a deeper network FRFB performance  is much worse.

\begin{figure}[h]
\center
\includegraphics[width=4in]{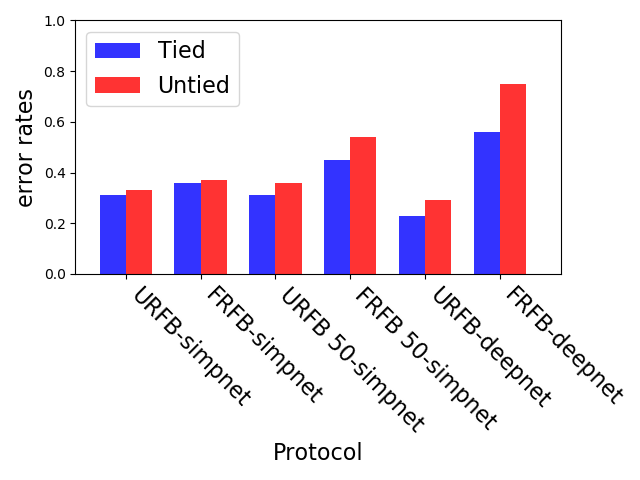}
\caption{\small Experiments with untying the convolutional layers on  \texttt{simpnet} and \texttt{deepnet\_s}. 
Blue - convolutional layers (tied), Red - untied.
\label{unty}}
\end{figure}

\subsection{Weight alignment}
One of the claims in \cite{Lilli} is that the network gradually  aligns the updated feedforward weights to the fixed feedback weights.
In figure \ref{corr_simp} we show the evolution of the correlations between the feedforward weights $W_l$ and $R_l$ for \texttt{simpnet}.
Recall that the layer with highest index is the output layer and typically reaches
high correlations in both URFB and FRFB.
We see, however, that the alignment is much stronger for the URFB. 
Note that when weights are highly correlated the network is effectively implementing
back-propagation. 

In figure \ref{corr_deep} we again show the evolution of the correlations between $W_l,R_l$ for the seven updated
layers of the deeper network \texttt{deepnet\_s}. Note that for some but not all layers the final correlations are very close to one. However the training loss
and error rates change very rapidly in the initial iterations when the correlations are very low. Interestingly the correlation levels
are not a monotone function of layer depth.

\begin{figure}[h]
\center
\includegraphics[height=2.5in]{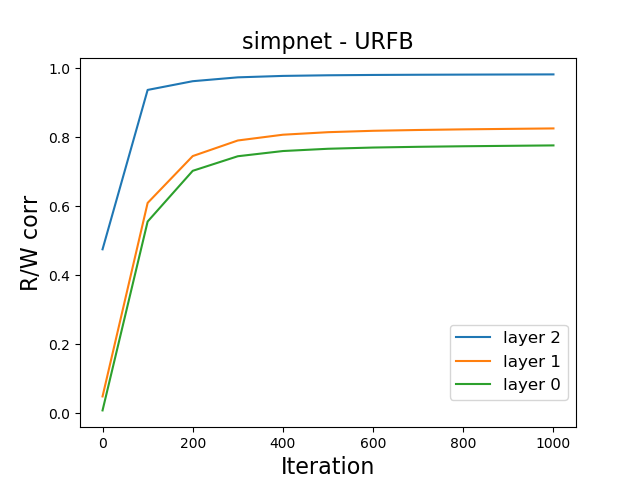}%
\includegraphics[height=2.5in]{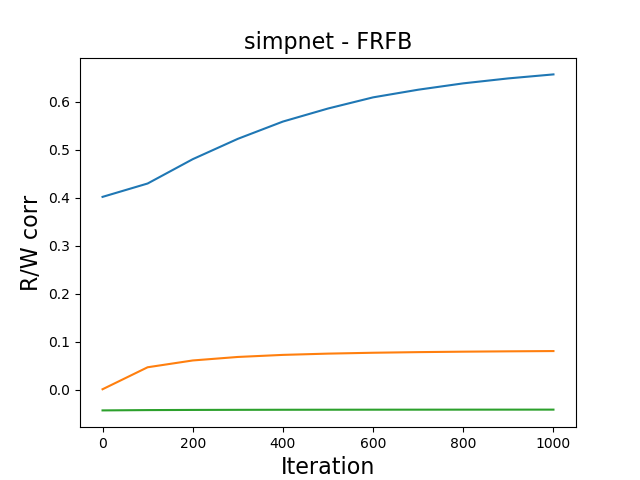}
\caption{Correlation between $W_l$ and $R_l$ for the three layers in \texttt{simpnet}. Left:
 URFB, Right: FRFB}\label{corr_simp}
 \end{figure}
\begin{figure}[h]
\center
\includegraphics[height=2.5in]{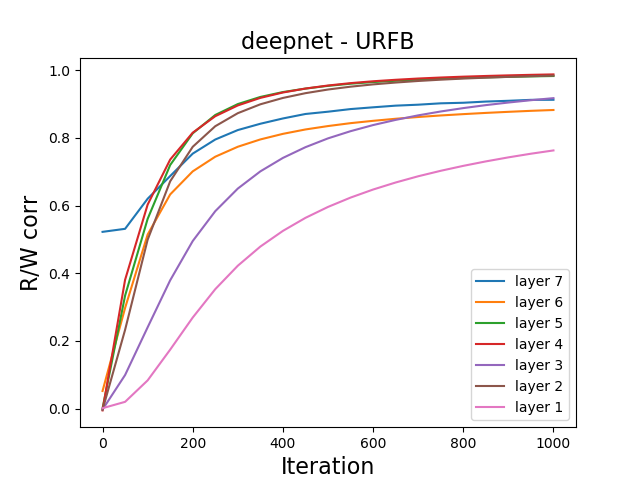}%
\includegraphics[height=2.5in]{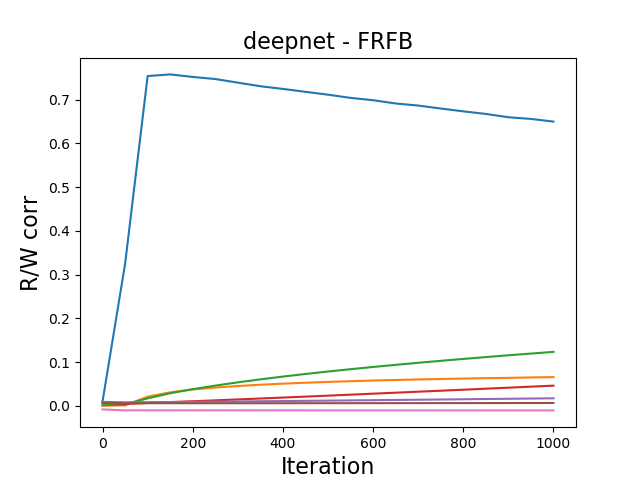}
\caption{Correlation between $W_l$ and $R_l$ for the seven updated layers in \texttt{deepnet\_s}. Left:
 URFB, Right: FRFB}\label{corr_deep}
 \end{figure}

\section{Mathematical analysis of updated random feedback}\label{theory}

The mathematical analysis closely follows the methods developed in \cite{ganguli}
and thus focuses on linear networks, i.e. $\si(x) =x$ and a simple quadratic loss.
We start with  a simple two layer network.

Let the input $x \in \R^{n_0}$, and the output $y=W_2 W_1 x \in \R^{n_2}$ with weights $W_1 \in \R^{n_1 \times n_0}, W_2 \in \R^{n_2 \times n_1}$. If $X$ is the $n_0 \times N$ matrix of input data
and $Y$ the $n_2 \times N$ of output data the goal is to minimize 
$$ C(\wa,\wb) = | Y - W_2W_1 X |^2.$$
We write $T = Y X^t \in \R^{n_2 \times n_0}$, and assume that $XX^t=I$, namely the input
coordinates are uncorrelated. The gradient of $L$ with respect to $\wa$ and $\wb$ yield the
following gradient descent ODE's, which corresponds to regular back-propagation:

\begin{align*}
\dot\wb = & (T-\wb\wa) \wat  \nonumber \\
\dot\wa = & \wbt (T-\wb \wa),
\end{align*}
with some initial condition $\wa(0), \wb(0).$
If we implement the FRFB or URFB described above we get the following three equations:

\begin{align}\label{mbp}
\dot\wb = & (T-\wb\wa) \wat  \nonumber \\
\dot\wa = & \rb (T-\wb \wa) \\
 \dot\rb = &\epsilon \wa (T-\wb\wa)^t, \nonumber
\end{align}
where $\rb \in \R^{n_1 \times n_2}$ and $\epsilon$ is a parameter. 
Setting $\eps=0$ corresponds to FRFB, as there is no modification of the matrix $R$. 
The URFB corresponds to $\eps=1.$
Our goal is to show that the larger $\epsilon$ the faster the convergence of the error to 0.

To simplify the analysis of \eqref{mbp} we assume $\wa(0)=\wb(0)=0$ and $\rb(0)$ is random.
Then $\rb = \rb(0)+\eps \wbt$ and the system reduces to 
\begin{align}\label{mbpa}
\dot\wb = & (T-\wb\wa) \wat  \nonumber \\
\dot\wa = & (\rb(0)+\eps \wbt) (T-\wb \wa).
\end{align}

For deeper networks, and again assuming the $W_l$ matrices are initialized at 0,
we have the following equations for  URFB:

\begin{align}\label{mk}
\dot W_k &=  E W_{1}^t \cdots W_{k-1}^t \nonumber \\
& \vdots \nonumber \\
\dot W_{i} &=  (R_{i+1}(0)+\eps W_{i+1}^t) \cdots (R_{k}(0)+\eps W_{k}^t) E W_{1}^t \cdots W_{i-1}^t  \nonumber\\
&  \vdots \nonumber \\
\dot W_{1} &=  (R_{2}(0)+\eps W_{2}^t) \cdots (R_{k}(0)+\eps W_{k}^t) E,
\end{align}
where $E=T-W_k \cdots W_1, \quad T \in \R^{n_k \times n_0}$ and $W_i \in \R^{n_i \times n_{i-1}}, i=1,\ldots,k$.
Again our goal is to show that as $\epsilon$ increases from 0 to 1, the error given by $e=tr(E^tE)$ converges faster to 0.

The precise statements of the  results and the proofs can be found in Appendix 2. Here we show
through a simulation that convergence is indeed faster as $\epsilon$ increases from $\eps=0$ (FRFB) to $\eps=1$
(URFB).

\subsection{Simulation}\label{sim}

We simulated the following setting. 
An input layer of dimension 40, two intermediate layers of dimension 100 and an output layer of dimension 10. We assume $X=I_{40}$ so that
$T= W^*_1 W^*_2 W^*_3$ with
$W^*_1 \in R^{40 \times 100}, W^*_2 \in R^{100 \times 100}, W^*_3 \in R^{100 
\times 10}.$
We choose the $W^*_i$ to have random independent normal entries with sd=.2.
We then initialize the three matrices randomly as $W_i(0), i=1,2,3$ to run regular back propagation.
For comparison we initialize $W_i(0)=0$ and initialize $R_i(0)$ randomly.
We run the differential equations with $\eps=0,.25,.5,1.$, where $\eps=0$ corresponds to $FRFB$ and 
$\eps=1$ to URFB. We run 1000 iterations until all 5 algorithms have negligible error. We see the results in figure \ref{simf}. In the first
row, for 3 different runs we show the log-error as a function of iteration, and clearly 
convergence rate increases with $\epsilon$.
In the three rows below that we show the evolution of the correlation of $W_l$ and $R_l^t$
with the same color code. We see that for FRFB (green) the correlation of the weights feeding into the last layer 
increases to 1 but for the deeper layers that does not hold. Moreover as $\epsilon$ increases to 1 the
correlations approach higher values at each layer. The top layer always converges to a correlation very close
to 1, lower layers do not reach correlation 1., and interestingly the correlation reached in the input layer
is slightly higher than that of the middle layer. Similar non-monotonicity of the correlation was observed in the
experiments in figure \ref{corr_deep}.

\begin{figure}[h]
\begin{center}
\includegraphics[height=4in,width=\textwidth]{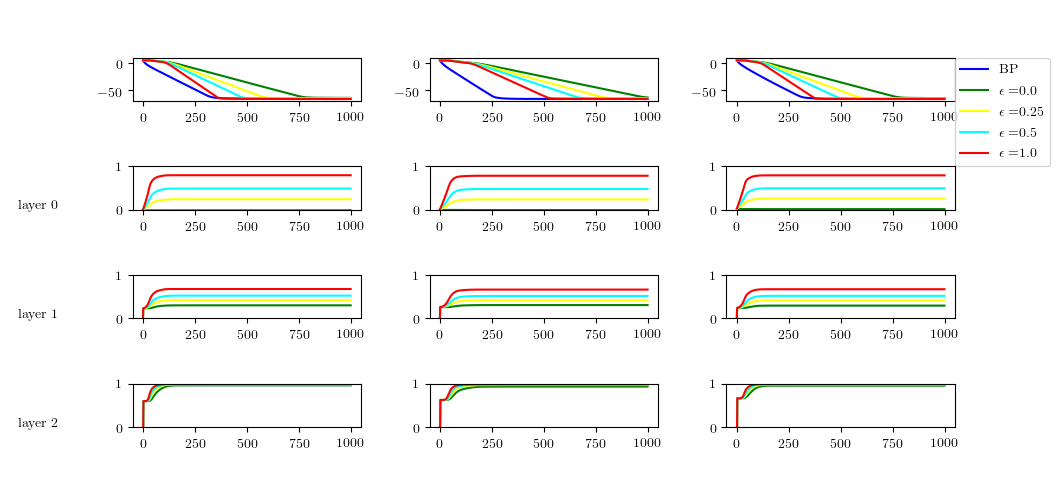}%
\end{center}
\caption{\small Top: comparison of log-error rates as a function of iteration for original BP and for four
different values of $\epsilon=0,0.25,.5,1.$
Results for three runs of the experiment. 
Last three rows, for each level of the network we show the evolution of the correlation between the $W$ and $R^t$
weights, for each of the values of $\epsilon.$}\label{simf}
\end{figure}

\section{Discussion}

The original idea proposed in \cite{zipser1990} of having separate feedback weights
undergoing the same Hebbian updates as the feedforward weights yields the original back-propagation algorithm
if the feedforward and feedback weights are initialized with the same values. We have shown that even when these
weights are initialized differently the performance of the algorithm is comparable to that of  back-propagation
and significantly outperforms fixed feedback weights as proposed in \cite{Lilli}. The improvement over fixed feedback
weights increases with the depth of the network and is demonstrated on challenging benchmarks such as CIFAR10 and CIFAR100.
We have also shown that in the long run the feedforward and feedback weights increase their alignment but the performance of 
the algorithm is comparable to back-propagation even at the initial iterations.
We have introduced a cost function whose derivatives lead to local Hebbian updates
and provided a proposal for how the associated error signal in the output layer could be implemented in a network.
We have shown theoretically,
in the linear setting, that adding the update to the feedback weights accelerates the convergence of the error
to zero. 

These contributions notwithstanding, there are still many aspects of this learning algorithm
that are far from biologically plausible. First, although we have removed the need for symmetric connections,
we have maintained a symmetric update rule, in that the update of a feedback and feedforward connection connecting
two units is the same. To use the formulation
in \cite{gerstner} a typical Hebbian update has the form $\Delta W = f(x_{pre})g(x_{post}),$ where $f,g$ 
are typically {\it not} the same function, however in our setting both $f$ and $g$ are linear which yields a symmetric Hebbian
update. In \cite{burbank} it is shown that a mirrored version of STDP could produce this type of symmetric update. Whether this
is actually biologically realistic is still an open question. 

Another important issue is the timing of the feedforward and feedback
weight updates that needs to be very tightly controlled. The update of the feedforward and feedback connections between
layer $l$ and $l+1$ requires the feedback signal to layer $l+1$ to have replaced the the feedforward signal in all its units,
while the feedforward signal is maintained in layer $l$. This issue is discussed in detail in \cite{Lillicrap2}.
They propose a neural model with several compartments. 
One that receives bottom-up or feedforward input and one that receives top-down feedback input. In a transient phase corresponding to the
feedforward processing of the network the top-down input contribution
to the neural voltage at the soma is suppressed. Then in a second phase this voltage is allowed in
and combined with the feedforward voltage contribution to enable synaptic modifications. In our proposal,
instead of combining the two voltages, the top-down voltage would replace the bottom up voltage. 
Still, in a multilayer network, this would need to be timed in such a way that the previous layer is still responding
only to the feedforward input.

An important component of the model proposed in  \cite{fbu} are the synaptic tags
that maintain the information on the firing of the pre and post-synaptic neurons allowing for a later synaptic modification
based on some reinforcement signal. This may offer a mechanism for controlling the timing
of the updates. An alternative direction of research would be to investigate the possibility of
desynchronizing the updates, i.e. making the learning process more stochastic. If images of similar classes are shown
in sequence it could be that it is not so important when the update occurs, as long as the statistics of the error signal
and the feedforward signal are the same. 

We have defined the network with neurons that have negative and positive values,
and synapses with negative and positive weights. Handling negative weights can be achieved with
properly adjusted inhibitory inputs. Handling the negative neural activity is more challenging and it would be
of interest to explore an architecture that employs only positive neural activity.
Finally we mention the issue of the training protocol. 
We assume randomly ordered presentation of data from all the classes, many hundreds of times. A more natural protocol would be to learn classes one at a time,
perhaps occasionally refreshing the memory of previously learned ones. Because our loss function is local
and updates to each class label are independent, one could potentially experiment with alternative protocols
and see if they are able to yield similar error rates.

\newpage
\noindent
{\bf Acknowledgements:} This work was supported in part by NIMH award no. R01 MH11555. I'd like to thank Nicolas Brunel, Ulises Pereira and the anonymous referees for helpful comments. 

\medskip
\noindent
{\bf Code:} Code for URFB can be found in \texttt{https://github.com/yaliamit/URFB.git}

\section*{Appendix 1: Networks for local thresholding}
We describe two local networks, one for computing the error signal $\delta_L$ (see \eqref{delta_outa}) at the top level
output units, and one to shut off the feedback signal $\delta_{l,i}$ based on \eqref{delthresh}.

\subsection*{Computing the output error signal}
We first describe a network for computing
the top level error signal  $\delta_L$, defined in equation \eqref{delta_out}, which depends on the activity of the output unit $x_L$
as well as on the target signal.
There have been many models proposed for Hebbian learning in terms of
non-linear functions of both the presynaptic activity and the state of the post-stynaptic neuron,
reviewed in detail in \cite{gerstner}.
A model for the particular dependence required here, where synaptic modification stops when the
input is sufficiently correct was proposed in \cite{fusi}.
A mechanism internal to the neuron is proposed, that
shuts off potentiation or depression of its incoming 
synapses when the input is too high or too low. However, shutting off synaptic modifications, does not manifest itself in the activity
of the neuron.  An implementation that is internal to the neuron provides
no explicit error signal, which can then be propagated to previous layers with the feedback connections.
When the network only has two layers, an input and an output, that is not an issue, but
with deeper networks we will need to use feedback connections to propagate the error signal.

as above, to avoid complications of modeling excitatory and inhibitory neurons we
 assume neurons have positive and negative firing rates and synaptic connections
are positive and negative. The main idea is to introduce a control unit $t_c$ that shuts the main unit $o_c$ off
when the input is outside the appropriate range.

Let $\delta_c$ be the activity of a neuron  associated with class $c$ in the top layer,
with input  given by $h_c = \lan W_c,x \ran$.  For simplicity we omit the index of the input layer. Assume $h_c$ always lies in the interval $[-M,M]$. Given a learning threshold $S$ (in the previous section we had $S=1$),
we want $\delta_c=1$ if class $c$ is presented and $h_c<S$, $\delta_c=-\mu$ if another class is presented and $h_c>-1$ and
$\delta_c=0$ otherwise.
Let $s_c$ be the unit providing the supervisory signal: 1 if class $c$ is being presented, -1 otherwise,
and let $t_c$ be a `control' neuron.
The input to $t_c$ is simply $o_c$ and
the full input of $o_c$ is
\begin{equation*}
H_c = h_c + 2M s_c - 2M t_c,
\end{equation*}
Also $\delta_c = \sigma_\delta(H_c), \quad t_c=\sigma_t(o_c).$

Let $\mu \le 1$. The transfer functions of $\delta_c, t_c$ are given by
\begin{equation*}
\sigma_\delta(x)= \begin{cases} 1+\eps & \textrm{if } x > 2M+S \\
1 & \textrm{if }  x \in [M, 2M+S] \\
0 & \textrm{if }  x \in [-M,M] \\
-\mu & \textrm{if } x \in [-2M-S,-M] \\
-\mu-\eps & \textrm{if } x < -2M-S
\end{cases},
\quad \quad
\sigma_t(x) = \begin{cases} 1 & \textrm{if } x > 1 \\ 0 & \textrm{if } -\mu \le x \le 1 \\ -1 & \textrm{if } x < -\mu 
\end{cases}
\end{equation*}
and shown in figure \ref{sigmas}.

\begin{figure}[h]
\begin{center}
\includegraphics[width=2.5in]{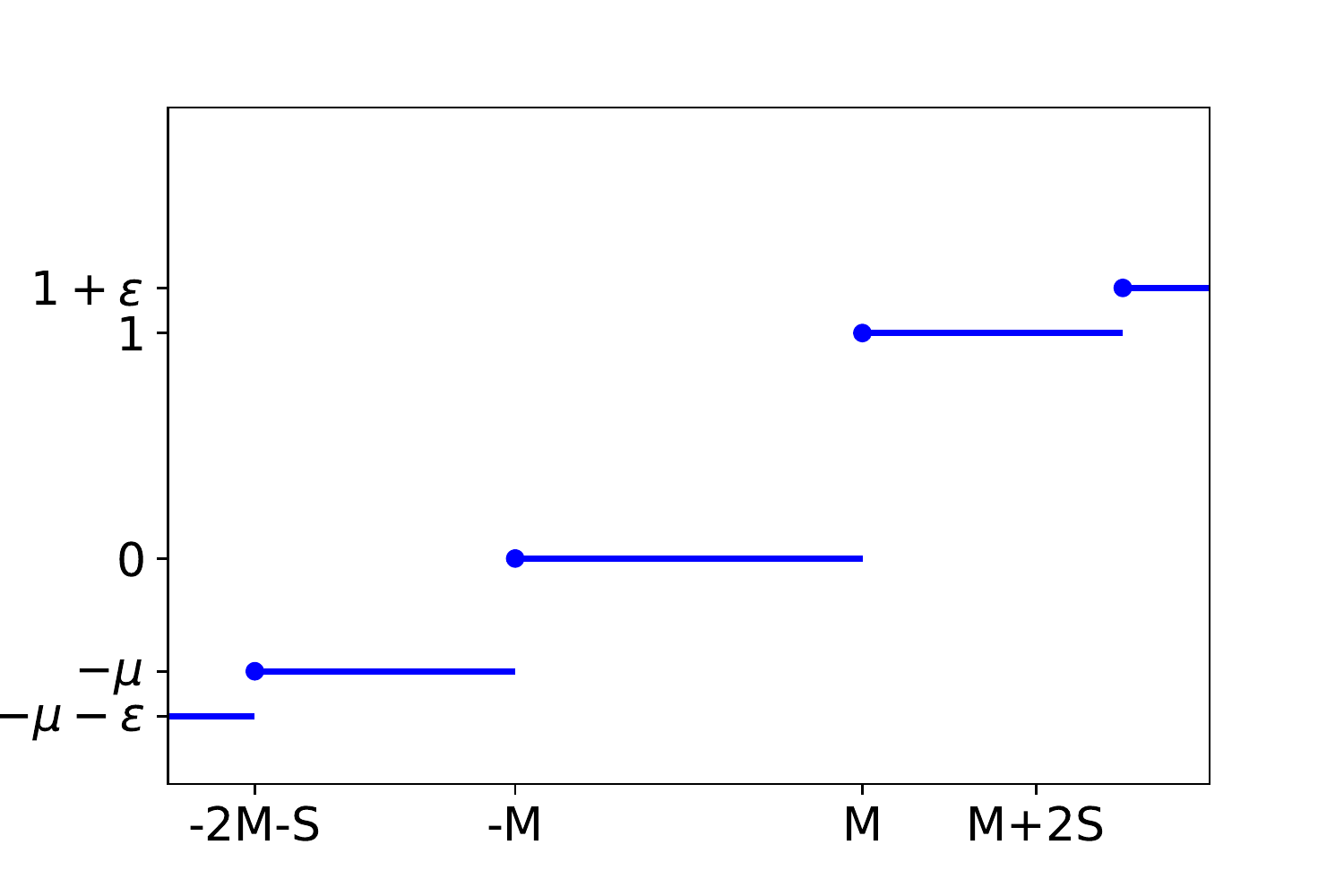}%
\includegraphics[width=2.5in]{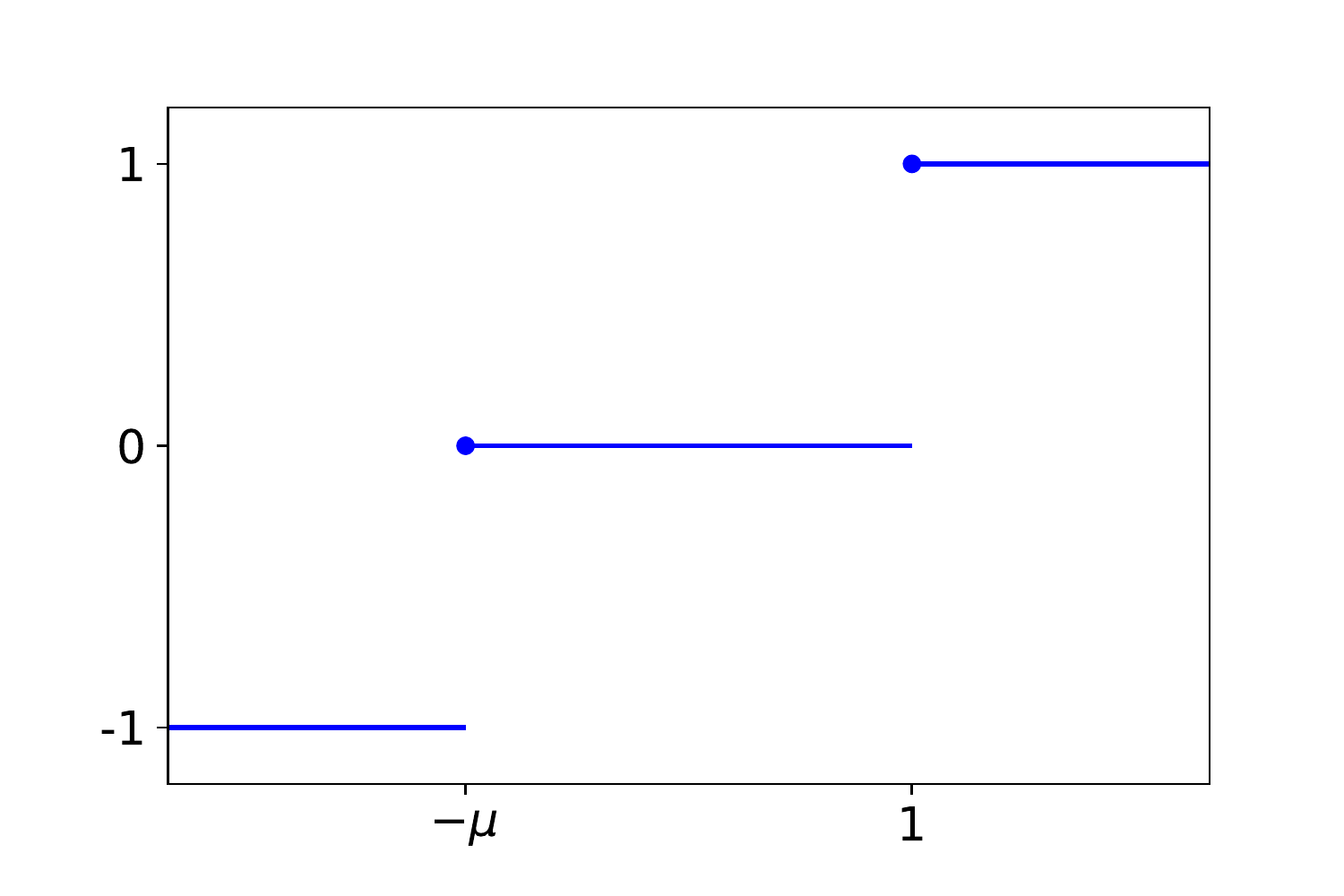}
\end{center}
\caption{Left: Transfer function  $\sigma_v$. Right: Transfer function  $\sigma_t$. \label{sigmas}}
\end{figure}

If the supervisory signal is $s_c=1$, $t_c$ is initialized at $0$ and $h_c \le S$ then $M \le H_c \le 2M+S$, so $\delta_c=1$ and $t_c=0$.
This yeilds the update cycle 
$$\delta_c=1 \rightarrow t_c=0 \rightarrow \delta_c=1, $$ 
and $\delta_c$ is constant at value $1$.

If $h_c>S$ then $H_c>2M+S$ so that $\delta_c=1+\eps$ and $t_c=1$. Then, $H_c=h_c \in [-M,M]$ so that $\delta_c=0$. This yields the update cycle:
$$\delta_c=1+\eps \rightarrow t_c=1 \rightarrow \delta_c=0 \rightarrow t_c=0 \rightarrow \delta_c=1+\eps,$$
so that $\delta_c$ oscillates between $0$ and $1+\eps$.

Conversely if the supervisory signal is $s_c=-1$ and $h_c \ge -S$ then $-2M-S \le H_c \le -M$ so $\delta_c=-\mu$
and $t_c=0$ yielding $$\delta_c=-\mu \rightarrow t_c=0 \rightarrow \delta_c=-\mu,$$ 
and $\delta_c$ is constant at $-\mu$. 
If $h_c < -S$ then $H_c < -2M-S$, $\delta_c=-\mu-\eps$ causing $t_c=-1$ and  
$H_c=h_c$ so that $\delta_c=0$ and we get the update cycle:
$$\delta_c=-\mu-\eps \rightarrow t_c=-1 \rightarrow v_c=0 \rightarrow t_c=0 \rightarrow \delta_c=-\mu-\eps,$$
so that $\delta_c$ oscillates between $0$ and $-\mu-\eps$.

In summary the activity of $\delta_c$ is input dependent. If $sign(\delta_c) h_c  <S$ then $\delta_c = 1, -\mu$ depending on the presented class,
and $\Delta W_{ic} = x_i \delta_c$.
If $sign(o)c) h_c  > S$ then $\delta_c$ oscillates and periodically visits the state $\delta_c=0$ and no update of synapses
connecting to $\delta_c$ occurs. Ignoring the oscillation this circuit can be thought of as implementing the rule in \eqref{delta_outa}.
The activity of $\delta_c$ is precisely the signal that needs to be propagated backwards to the input layer.
This network is illustrated in figure \ref{circuit1}.

\begin{figure}[h]
\begin{center}
\includegraphics[width=3in]{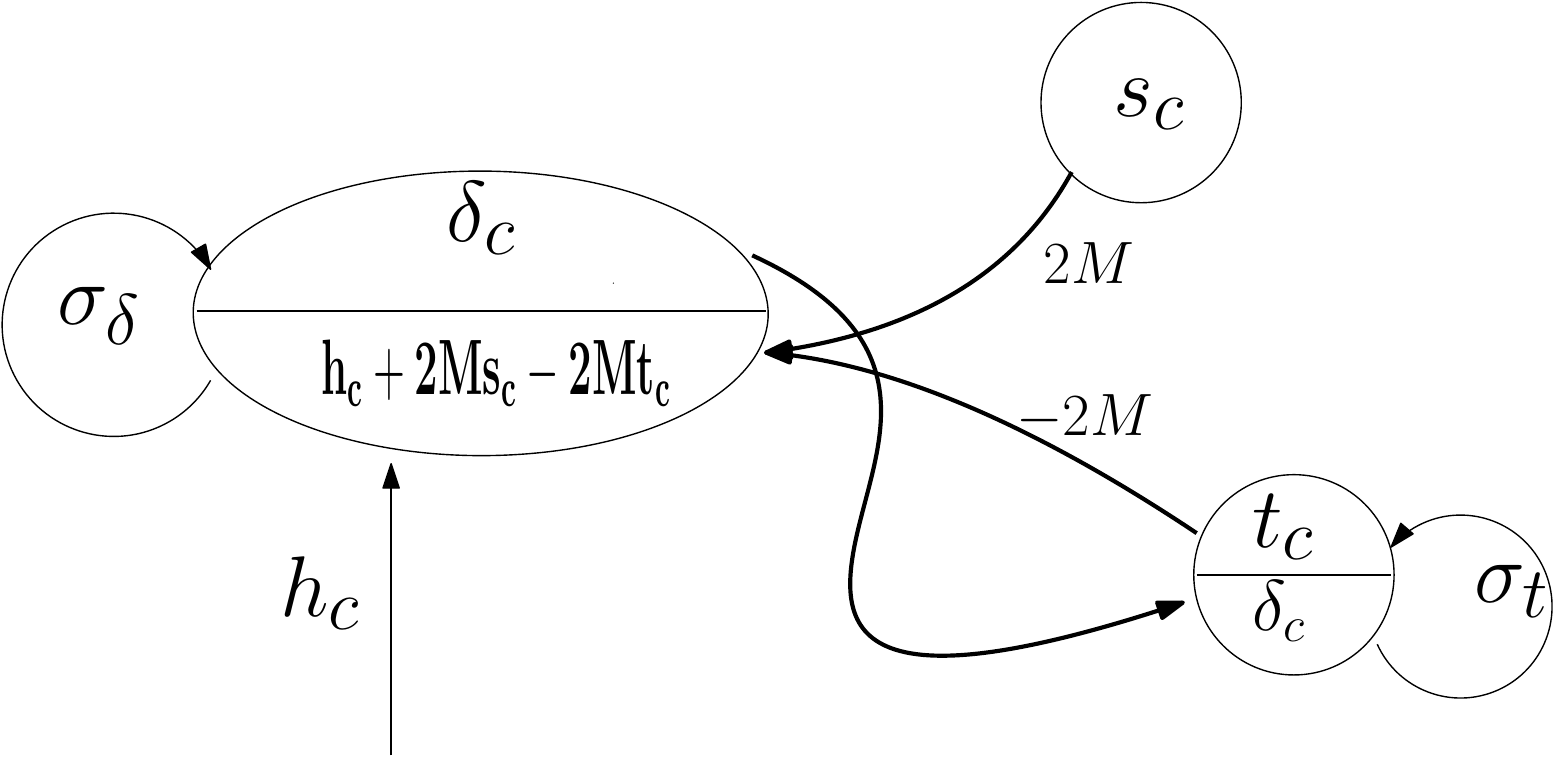}
\end{center}
\caption{The circuit including the supervisory neuron $s_c$, the top layer neuron
$\delta_c$ and the control neuron $t_c$. \label{circuit1}}
\end{figure}

\subsection*{Computing the shutdown of the feedback computation}
We describe a local network to compute the expression in equation \ref{delthresh}.
We add two control neurons $u_{l,i}, v_{l,i}$ with a simple threshold
activation and input from $x_{l,i}$. We set $u_{l,i}=\bf{1}[{x_{l,i} \ge 1}]$
and $v_{l,i} = \bf{-1}[{x_{l,i} \le -1}]$. Only one of these units can be active.
The input to unit $l,i$ is the top-down feedback 
$\delta_{l,i}$, and we add $- 2K u_{l,i} + 2K v_{l,i}$.
Let 
$$\sigma_\delta(u) = \begin{cases} u & \text{if } u \ge -K \\ 0 & \text{otherwise} \end{cases}
$$
 and set
$$ \tilde \delta_{l,i} = \sigma_\delta( \delta_{l,i} -2 K u_{l,i} + 2K v_{l,i}).$$
Assuming $\delta_{l,i}$ is bounded between $-K,K$ then if either of the units $u_{l,i}, v_{l,i}$ 
are active $ \delta_{l,i} -2 K u_{l,i} + 2K v_{l,i} \le -K$ and $\tilde \delta_{l,i}=0$,
otherwise $\tilde \delta_{l,i} = \delta_{l,i}.$
In terms of timing, before the feedback signal arrives at the unit, 
but after the value of $x_{l,i}$ is used to update the weights $W_l$,
the original
feedforward activity $x_{l,i}$ activates the units $u_{i,l}$ and $v_{i,l}$ so
that their input is added to the incoming feedback signal $\delta_{i,k}$.
This sequence of updates is illustrated in figure \ref{bp_circ}.
\begin{figure}[h]
\center
\includegraphics[height=3in]{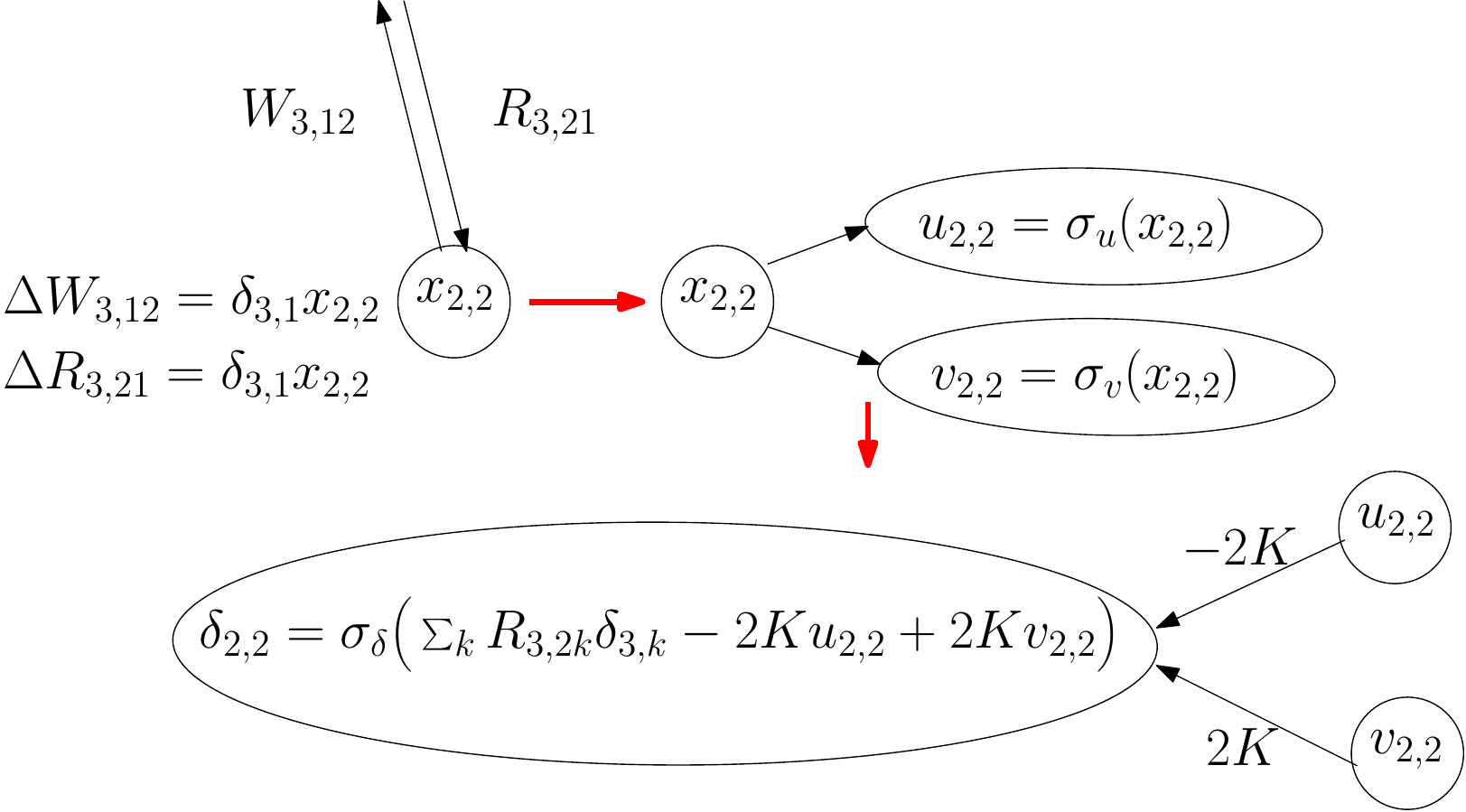}
\caption{First weights connecting unit $x_{2,2}$ to layer above get updated based on the
value of $x_{2,2}$. Second the activation of $u_{2,2}$ and $v_{2,2}$ is triggered. Third
the computation of $\delta_{2,2}$ is performed in terms of $\delta_{3,k} and R_{3,2k}$
and set to 0 if either $u_{2,2}$ or $v_{2,2}$ are active.}\label{bp_circ}
\end{figure}

\section*{Appendix 2: Statement of theorems and proofs}

\subsection*{Two layer network}

Let $T = U \Lambda_T V^t$ be the SVD of $T$. Assuming $n_2 \le n_0$ we have
$U \in \R^{n_2 \times n_2},  V \in \R^{n_0 \times n_2}$ and $\Lambda_T \in \R^{n_2 \times n_2}$ positive diagonal. We set the initial condition for $\rb$ by first picking $n_2$ orthogonal vectors
in $\R^{n_1}$ yielding an $n_1\times n_2$  matrix $S$ with orthogonal columns, and writing $\rb(0)=S \Lambda_R U^t$
with $\Lambda_R$ diagonal and positive.
This is a restricted initial condition since for the most general initial condition, if we are free to choose $S$, $\Lambda_R$ would be upper triangular.
Since $\wa(0)=\wb(0)=0$ and $\rb(0)T=S \Lambda_R \Lambda_T V^t$ then we will always
have $\wa=S\La V^t$
for some $\La \in \R^{n_2 \times n_2}$ diagonal,
and $\wb = U\Lb S^t$ for some $\Lb \in \R^{n_2 \times n_2}$ diagonal.
 This setup yields:
$$ \wb\wa = U \Lb \La V^t, \quad (T-W_2W_1) =  U (\Lambda - \Lb\La) V^t. $$
So that the matrix equations in \eqref{mbpa} decouple into $n_2$ pairs of scalar ODE's
one for each of the $n_2$ columns of $V$.

\begin{align*}
\dot \Lambda_2 = & (\Lt - \Lb\La) \La \nonumber \\
\dot \Lambda_1 = & (\Lr + \eps \Lb) (\Lt - \Lb\La).
\end{align*}

We analyze one such pair of scalar equations: 
\begin{align}\label{dmbpas}
\dot \lambda_2 = & (\lambda_T - \lambda_2\lambda_1) \lambda_1 \nonumber \\
\dot \lambda_1 = & (\lambda_R+ \eps \lambda_2) (\lambda_T - \lambda_2\lambda_1),
\end{align}
assuming without loss of generality that $0 < \lambda_T \le 1$ and $\lambda_R << \lambda_T.$

\begin{thm}
Define $e=\lt-\lb\la$, and 
assume $\la(0)=0, \lb(0)=0$. Then  $e^2$ converges exponentially fast to 0 and the rate of convergence
increases as $\epsilon$ increases.
\end{thm}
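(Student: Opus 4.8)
The plan is to reduce the coupled system \eqref{dmbpas} to a single autonomous ODE for the error $e = \lt - \lb\la$, then show that its right-hand side is bounded below (in absolute value relative to $e$) by something monotone in $\eps$. First I would observe that since $\la(0)=\lb(0)=0$, integrating the two equations in \eqref{dmbpas} gives a conserved-quantity-type relation between $\la$ and $\lb$: from $\dot\la = (\lr + \eps\lb)e$ and $\dot\lb = \la e$ we get $\dot\la - \eps \dot\lb = \lr e = \lr \dot\lb/\la$ when $\la \neq 0$; more cleanly, $\frac{d}{dt}(\la - \eps\lb) = \lr e$ and $\frac{d}{dt}(\tfrac12\lb^2) = \la e \cdot \dots$ Actually the cleanest route is to note $\dot\la \, \la = (\lr+\eps\lb)\,e\,\la$ and $\dot\lb\,(\lr+\eps\lb) = \la\,e\,(\lr+\eps\lb)$, so $\frac{d}{dt}\big(\tfrac12\la^2\big) = \frac{d}{dt}\big(\lr\lb + \tfrac{\eps}{2}\lb^2\big)$, which with zero initial data yields the exact first integral
\begin{equation*}
\tfrac12 \la^2 = \lr \lb + \tfrac{\eps}{2}\lb^2 .
\end{equation*}
This pins $\la$ as an explicit function of $\lb$ along trajectories (the positive branch, since both start at $0$ and increase initially), namely $\la = \sqrt{2\lr\lb + \eps\lb^2}$.

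Next I would derive the ODE for $e$. Differentiating $e = \lt - \lb\la$ gives $\dot e = -(\dot\lb\,\la + \lb\,\dot\la) = -e\big(\la^2 + \lb(\lr + \eps\lb)\big)$. Define $g(\lb) := \la^2 + \lb(\lr+\eps\lb)$; using the first integral $\la^2 = 2\lr\lb + \eps\lb^2$ this is $g(\lb) = 3\lr\lb + 2\eps\lb^2 \geq 0$. Hence $\dot e = -g(\lb)\,e$ with $g \geq 0$, so $\frac{d}{dt}e^2 = -2g(\lb)e^2 \leq 0$, giving monotone decay of $e^2$; and since the dynamics push $\lb$ up away from $0$ (as long as $e>0$, which holds because $e$ cannot cross $0$), $g(\lb)$ becomes bounded below by a positive constant after some finite time, yielding exponential convergence. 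For the $\eps$-monotonicity, the key is that $g(\lb) = 3\lr\lb + 2\eps\lb^2$ is increasing in $\eps$ \emph{for a fixed value of} $\lb$, but one must be careful: the trajectory $t\mapsto\lb(t)$ itself depends on $\eps$. So the honest argument is a comparison/monotonicity argument on the whole system: I would parametrize by $\lb$ instead of $t$ where possible, or show by a differential-inequality comparison that a larger $\eps$ forces $\lb(t)$ to grow at least as fast (since $\dot\lb = \la e = \sqrt{2\lr\lb+\eps\lb^2}\;e$ and the coefficient of $e$ is increasing in $\eps$ at fixed $\lb$), hence $g$ is pointwise-in-time larger, hence $e^2$ decays faster. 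A clean way to package "decays faster" is: show $\int_0^t g(\lb(s))\,ds$ is nondecreasing in $\eps$ for every $t$, so that $e(t)^2 = e(0)^2 \exp\!\big(-2\int_0^t g\big)$ is nonincreasing in $\eps$.

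The main obstacle is precisely this last point — the trajectory's dependence on $\eps$. The naive "$g$ is increasing in $\eps$" is only true at fixed $\lb$, so I expect to spend the real effort on a monotone-comparison lemma for the scalar system: writing $w := \tfrac12\lb^2$ or using $\lb$ as the independent variable to eliminate $e$, one gets $\frac{d\lb}{dt} = \sqrt{2\lr\lb + \eps\lb^2}\,(\lt - \lb\sqrt{2\lr\lb+\eps\lb^2})$, a closed autonomous scalar ODE in $\lb$ alone whose right-hand side, for $\lb$ in the relevant range (before the equilibrium $\lb\la=\lt$), is increasing in $\eps$. Standard ODE comparison then gives $\lb_{\eps_1}(t) \le \lb_{\eps_2}(t)$ for $\eps_1 \le \eps_2$, and since $g$ is increasing in both $\lb$ and (at fixed $\lb$) $\eps$, monotonicity of $\int_0^t g$ in $\eps$ follows. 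I would also need to check the equilibrium $e=0$ (equivalently $\lb\la = \lt$) is the relevant attracting state and that $\la,\lb$ stay in the region where the square-root branch is valid and the comparison inequalities hold — routine once the first integral is in hand, so I would state it as a lemma and verify the sign conditions without belaboring them.
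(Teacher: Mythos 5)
Your reduction is exactly the paper's: the first integral $\tfrac12\la^2=\lr\lb+\tfrac{\eps}{2}\lb^2$ is equation \eqref{lll} for $i=2$, and your error equation $\dot e=-e\bigl(\tfrac32\la^2+\tfrac{\eps}{2}\lb^2\bigr)$ is \eqref{rate}; the exponential-decay part is fine. The gap is in the $\eps$-monotonicity step, which is the substance of the theorem. You propose a comparison argument on the closed scalar ODE for $\lb$, namely $\dot\lb=\la(\lt-\lb\la)$ with $\la=\sqrt{2\lr\lb+\eps\lb^2}$, claiming its right-hand side is increasing in $\eps$ at fixed $\lb$ on the relevant range. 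It is not: differentiating in $\eps$ gives $\tfrac{\lb^2}{2\la}(\lt-2\lb\la)$, which turns negative as soon as $\lb\la>\lt/2$, i.e.\ strictly before the equilibrium. Worse, the conclusion you want from the comparison is false: the equilibrium value $\lb^*$ solves $\lt^2=2\lr(\lb^*)^3+\eps(\lb^*)^4$ and is therefore strictly \emph{decreasing} in $\eps$, so the ordering $\lb_{\eps_1}(t)\le\lb_{\eps_2}(t)$ cannot hold for all $t$, and "$g(\lb)=3\lr\lb+2\eps\lb^2$ is larger because $\lb$ is larger" breaks down.

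The paper's fix is to run your comparison on $\la$ instead. Eliminating $\lb$ via the same first integral gives $\dot\la=F(\la)\bigl(\lt-G(\la)\la\bigr)$ with $F=\sqrt{\lr^2+\eps\la^2}$ and $G=\la^2/(F+\lr)$; here $\partial_\eps F>0$ and $\partial_\eps G<0$, so the right-hand side is increasing in $\eps$ at fixed $\la$ for as long as $e>0$, and your variational/comparison lemma applies cleanly to give $\la$ increasing in $\eps$. One then shows the decay factor $\tfrac32\la^2+\tfrac{\eps}{2}\lb^2$ increases by checking that $\eps\lb^2$ (not $\lb$ itself) is increasing in $\eps$, via the explicit formula $\sqrt{\eps}\,\lb=\sqrt{\lr^2/\eps+\la^2}-\lr/\sqrt{\eps}$, which is increasing in $\eps$ whenever $\la$ is. So your overall architecture is right, but the choice of $\lb$ as the comparison variable is the one choice that fails; switching to $\la$ is what makes the argument close.
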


\begin{proof}[Proof of Theorem 1]
For $k=2$ equation \eqref{e2} reduces to
\begin{equation}\label{rate}
\dot{e^2} = -2e^2 (\la^2 + \frac{1}{2} \la^2 + \frac{\epsilon}{2}\lb^2).
\end{equation}
Since $\lt,\lr>0$ and $\la(0)=\lb(0)=0$ we see that $\la$ and $\lb$ are increasing in time,
and so after a finite time are uniformly bounded away from $0$. This implies that the error
goes to $0$ exponentially fast. We also note that  $\la \lb = \lt$ is a stationary point, so that $\la \lb \le \lt$
always holds. If we can show that the factor of $-e^2$ increases with $\epsilon$ then the rate of convergence of $e$ to zero
increases with $\epsilon.$

Solving for $\lambda_{i}$ in equation \eqref{lll} and taking the positive solution we can write
\begin{align}\label{ww}
\lambda_i = & G(\lambda_{i-1},\lambda_{R,i},\eps) \\ 
\lambda_{i-1} = & H(\lambda_i,\lambda_{R,i},\eps), \nonumber
\end{align}
where
\begin{align}\label{FG}
F(x,\lambda_R,\eps) =  &\sqrt{\lambda_R^2+\eps x^2}, \nonumber \\
G(x,\lambda_R,\eps) = &\frac{F(x,\lambda_R,\eps)-\lambda_R}{\epsilon} = \frac{x^2}{F(x,\lambda_R,\eps)+\lambda_R},  \text{ and} \nonumber \\
H(x,\lambda_R,\eps) = & \sqrt{2 \lambda_R x + \eps x^2}.
  \end{align}
  
If we show that $\lambda_1$ increases with
$\epsilon$, then, by \eqref{ww},  $\sqrt{\eps} \lb = \sqrt{\lr^2/\eps + \la^2} - \lr/\sqrt{\eps},$
 is easily seen to increase with $\epsilon$ since $\la$ and $\lambda_R$ are positive.
(In general, for any positive function $g(\eps)$ that is increasing in $\epsilon$ the function
$f(\eps) = \sqrt{c^2/\eps + g(\eps)} - c/\sqrt{\eps}$ is increasing w.r.t to $\epsilon$.) 
Thus $\eps \lambda_2^2$ is increasing in $\epsilon$ and the factor of $-e^2$ in equation \eqref{rate} is increasing in $\epsilon.$

Since $\lb = G(\la,\lr,\eps)$
we get the scalar equation for $\la$ as
\begin{align}\label{bb}
\dot \lambda_1 = & F(\la,\lr,\eps)(\lt-G(\la,\lr,\eps)\la) \equiv f_1(\la,\eps)
\end{align}
Let  $h>0$ be a small increment and write,
$$ \dot \lambda_\eps = f(\lambda_\eps,\eps), \quad \dot \lambda_{\eps+h} = f(\lambda_{\eps+h}, \eps+h).$$
Denote $\Delta=\lambda_{\eps+h} - \lambda_\eps$ then to first order
$$ \dot \Delta = \frac{\partial f}{\partial \lambda} (\lambda_\eps,\eps) \Delta + \frac{\partial f}{\partial \eps} (\lambda_\eps,\eps).$$
So if the second term is positive and the initial condition $\Delta(0)=0$ then, using the formula for the solution to first order ODE's, $\Delta$ is always positive, so that $\lambda_\eps$ is increasing in $\eps$. 
In our setting by \eqref{FG}
\begin{align*}
\frac{\partial F(x,\lambda_R,\eps)}{\partial \eps} = & \frac{x^2}{2F(x,\lambda_R,\eps)} > 0, \\
\frac{\partial G(x,\lambda_R,\eps)}{\partial \eps} = & -\frac{x^4}{2(F(x,\lambda_R,\eps)+\lambda_R)^2F(x,\lambda_R,\eps)} <0, 
\end{align*}
Consequently:
\begin{equation*}
\frac{\partial  f_1}{\partial \eps} =  \frac{e\la^2}{2F(\la,\lambda_R,\eps)} + F\la \frac{\la^4}{2(F(\la,\lambda_R,\eps)+\lambda_R)^2F(\la,\lambda_R,\eps)} > 0.
\end{equation*}
Thus $\lambda_1$ is increasing in $\epsilon.$
\end{proof}

\subsection*{Deeper networks}
Let $T = U \Lambda_T V^t$ be the SVD of $T$. Assuming $n_k \le n_0,n_1\ldots,n_{k-1}$ we have
$U \in \R^{n_k \times n_k},  V \in \R^{n_0 \times n_k}$ and $\Lambda_T \in \R^{n_k \times n_k}$ positive diagonal.
We set the initial condition for $R_k$ by first picking $n_{k}$ orthogonal vectors
in $\R^{n_{k-1}}$  yielding an $n_{k-1}\times n_{k}$ matrix $U_{k}$ and then writing $R_k(0)=U_k \Lambda_{R,k} U^t$
with $\Lambda_{R,k} \in \R^{n_k \times n_k}$ diagonal and positive and $U_k \in \R^{n_{k-1} \times n_{k}}$ with orthogonal columns.
For general $i$ choose $U_{i} \in \R^{n_{i-1} \times n_k}$ orthogonal, $\Lambda_{R,i} \in \R^{n_{k} \times n_{k}}$ positive and diagonal,
and set $R_{i}(0) = U_{i} \Lambda_{R,i} U_{i+1}^t$. Write $W_i = U_{i+1} \Lambda_{i} U_i^t, i=1\ldots,k$ and with $U_{k+1}=U$,
and assume $\Lambda_i(0)=0$ then the system \eqref{mk} decouples into $n_k$ scalar equations
one for each of the directions in $U,V$:
\begin{align}\label{deep}
\dot \lambda_k & = e \lambda_{k-1} \cdot \lambda_1 \nonumber \\
& \vdots \nonumber \\
\dot \lambda_i &=  (\lambda_{R,k}+\eps \lambda_k) \cdots (\lambda_{R,i+1} + \eps \lambda_{i+1}) e \lambda_{i-1} \cdots \lambda_1 \nonumber \\
& \vdots \nonumber \\
\dot \lambda_1 &= (\lambda_{R,k}+\eps \lambda_k) \cdots (\lambda_{R,2} + \eps \lambda_{2}) e,
\end{align}
with $\lambda_i(0)=0$ and $\lambda_{R,i}>0$ random. This implies that $\lambda_i$ are increasing and
$e$ is always positive.
Multiplying the $i$'th equation by $(\lambda_{R,i} + \eps \lambda_i)$ and the $i-1$'th equation by $\lambda_{i-1}$ we get the equality
$$ \dot \lambda_i (\lambda_{R,i} + \eps \lambda_i) = \dot \lambda_{i-1} \lambda_{i-1}, i=2,\ldots,k.$$
Since $\lambda_i(0)=0$ we can integrate and get
\begin{equation}\label{lll}
\lri \lambda_i + \frac{\eps}{2} \lambda_i^2 = \frac{1}{2} \lambda_{i-1}^2
\end{equation}
Rewriting $e=\lambda_T-\prod_{i=1}^k \lambda_i,$ we have
\begin{align}\label{e2}
\dot {\frac{e^2}{2}} =  - e \sum_{i=1}^k \dot \lambda_i \prod_{j \ne i} \lambda_j 
 =& -e^2 \sum_{i=1}^k \prod_{j=1}^{i-1} \lambda_j^2 \prod_{j=i+1}^k \lambda_j (\lambda_{R,j} + \eps \lambda_j)  \nonumber \\
=&  -e^2 \sum_{i=1}^k \prod_{j=1}^{i-1} \lambda_j^2 \prod_{j=i+1}^k (\frac{\eps}{2} \lambda_j^2 + \frac{1}{2} \lambda_{j-1}^2)
\end{align}
where the third equality follows from \eqref{lll}.
It follows that for each $\epsilon$ the error converges exponentially fast to $0$ and if
it can be shown that $\lambda_i$ are increasing in $\epsilon$ the convergence rate is increasing with $\epsilon.$
We are able to show this  for $k=3$.

\begin{thm} For $k=3$, assume $\lambda_1(0)=0, \lambda_2(0)=0$. Assume $\lambda_{R,2},\lambda_{R,3}< \delta << \lambda_T<1$,
and assume $\lambda_{R,2} > \frac{1+\sqrt{1+\eps}}{2}\lambda_{R,3}$, then $\lambda_1, \lambda_2$ and $\lambda_3$
are increasing in $\epsilon$ so that $e$ converges faster to 0 as $\epsilon$ increases.
\end{thm}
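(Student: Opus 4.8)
The plan is to follow the template of the proof of Theorem~1: use the conserved quantities \eqref{lll} to collapse the coupled system \eqref{deep} (with $k=3$) into a single scalar autonomous ODE $\dot\lambda_j = f_j(\lambda_j,\eps)$ for whichever $\lambda_j$ we wish to track, and then invoke the first--order comparison argument used there --- if $\partial f_j/\partial\eps>0$ along the trajectory (with the other weights frozen at their values as functions of $\lambda_j$ and $\eps$), then, since $\lambda_j(0)=0$ and $\Delta:=\lambda_{\eps+h}-\lambda_\eps$ solves $\dot\Delta = (\partial_\lambda f_j)\Delta + \partial_\eps f_j$ with $\Delta(0)=0$, the trajectory $\lambda_j(t;\eps)$ is increasing in $\eps$. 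Concretely \eqref{lll} gives $\lambda_1^2 = 2\lambda_{R,2}\lambda_2+\eps\lambda_2^2$ and $\lambda_2^2 = 2\lambda_{R,3}\lambda_3+\eps\lambda_3^2$, and $\lambda_{R,i+1}+\eps\lambda_{i+1}=F(\lambda_i,\lambda_{R,i+1},\eps)$; with the maps $G,H$ of \eqref{FG} this writes the two ``downstream'' weights in terms of an ``upstream'' one via $G$, and the two ``upstream'' weights in terms of a ``downstream'' one via $H$. I would obtain $\lambda_1$ by parametrising by $\lambda_1$, $\lambda_2$ by parametrising by $\lambda_2$, and $\lambda_3$ by parametrising by $\lambda_3$.

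For $\lambda_1$ the argument is as clean as in Theorem~1. With $\lambda_2=G(\lambda_1,\lambda_{R,2},\eps)$ and $\lambda_3=G(\lambda_2,\lambda_{R,3},\eps)$ one has $\dot\lambda_1 = F(\lambda_2,\lambda_{R,3},\eps)\,F(\lambda_1,\lambda_{R,2},\eps)\,e$, $e=\lambda_T-\lambda_1\lambda_2\lambda_3$. Holding $\lambda_1$ fixed, $\partial G/\partial\eps<0$ (from the Theorem~1 proof) makes both $\lambda_2$ and $\lambda_3$ decrease in $\eps$, so $e$ increases; from $\eps\lambda_2^2=\lambda_1^2-2\lambda_{R,2}\lambda_2$ the factor $F(\lambda_2,\lambda_{R,3},\eps)=\sqrt{\lambda_{R,3}^2+\eps\lambda_2^2}$ increases too, while $F(\lambda_1,\lambda_{R,2},\eps)$ increases trivially. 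All three factors increase in $\eps$, so $\lambda_1$ is increasing in $\eps$.

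The substance is $\lambda_2$ and $\lambda_3$, where the downstream variable genuinely competes. Parametrising by $\lambda_2$ one has $\lambda_1=H(\lambda_2,\lambda_{R,2},\eps)$ (increasing in $\eps$ at fixed $\lambda_2$) but $\lambda_3=G(\lambda_2,\lambda_{R,3},\eps)$ (decreasing in $\eps$ at fixed $\lambda_2$), so $\partial_\eps f_2$ is a priori unsigned, where $f_2(\lambda_2,\eps)=F\lambda_1 e$ is the right--hand side. Writing $F=\lambda_{R,3}+\eps\lambda_3$ and eliminating $\lambda_1^2,\eps\lambda_2^2$ via \eqref{lll}, the product rule collapses $\partial_\eps f_2$ to
\begin{equation*}
\frac{\partial f_2}{\partial\eps} \;=\; \frac{\lambda_2^2}{2}\left[\,e\left(\frac{\lambda_1}{F}+\frac{F}{\lambda_1}\right)-\lambda_3\left(\lambda_{R,3}\lambda_2-2\lambda_{R,2}\lambda_3\right)\right].
\end{equation*}
If $\lambda_{R,3}\lambda_2\le 2\lambda_{R,2}\lambda_3$ the bracket is clearly positive. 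Otherwise, the identity $(\lambda_2/\lambda_3)^2=2\lambda_{R,3}/\lambda_3+\eps$ from \eqref{lll}, together with the hypothesis $\lambda_{R,2}>\tfrac{1+\sqrt{1+\eps}}{2}\lambda_{R,3}$ --- equivalently $4\lambda_{R,2}^2-4\lambda_{R,2}\lambda_{R,3}-\eps\lambda_{R,3}^2>0$, since $\tfrac{1+\sqrt{1+\eps}}{2}$ is the positive root of $4x^2-4x-\eps$ --- forces $\lambda_3<\lambda_{R,3}^2/(2\lambda_{R,2})$, hence $\lambda_2,\lambda_1$ are $O(\delta)$, $\lambda_1\lambda_2\lambda_3\ll\lambda_T$, $e>\lambda_T/2$, and since $\lambda_{R,i}\ll\lambda_T$ the first term then dominates (using $\lambda_1/F+F/\lambda_1\ge2$). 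Either way $\partial_\eps f_2>0$, so $\lambda_2$ is increasing in $\eps$.

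For $\lambda_3$, parametrise by $\lambda_3$ and set $P=\lambda_1\lambda_2$, which by \eqref{lll} is an explicit function of $\lambda_3,\eps$ increasing in both. Then $\dot\lambda_3=f_3(\lambda_3,\eps)=(\lambda_T-P\lambda_3)P$ and $\partial_\eps f_3=\partial_\eps P\,(\lambda_T-2P\lambda_3)$, which is positive while $\lambda_1\lambda_2\lambda_3<\lambda_T/2$; past that point one must argue more carefully near the equilibrium, using that $\lambda_1,\lambda_2$ are already monotone in $\eps$ and analysing the three equilibrium relations of \eqref{lll} together with $e=0$ --- this is the only place the ratio hypothesis is truly needed, to tip the sign of $\partial_\eps\lambda_3^{*}$, which in the $\lambda_{R,i}\to0$ limit would be negative, over to positive through the $\lambda_{R,i}$ corrections. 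I expect this tail of the $\lambda_3$ step --- the borderline behaviour near equilibrium --- to be the main obstacle, with the bookkeeping in the $\lambda_2$ estimate the next hardest point. Granted all three $\lambda_j$ increasing in $\eps$, every factor in the coefficient of $-e^2$ in \eqref{e2} with $k=3$ --- each $\lambda_j^2$, and each $\tfrac\eps2\lambda_j^2+\tfrac12\lambda_{j-1}^2$, a product of positive increasing functions since $\eps\lambda_j^2$ is increasing (directly, or by the elementary remark of the Theorem~1 proof that $\sqrt{c^2/\eps+g(\eps)}-c/\sqrt\eps$ is increasing when $g$ is) --- is increasing in $\eps$, so the coefficient is, and Gronwall gives faster convergence of $e$ to $0$; in fact this conclusion already needs only $\lambda_1,\lambda_2$ monotone, so the full monotonicity of $\lambda_3$ is only for the stated form of the theorem.
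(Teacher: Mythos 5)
Your proposal is correct and follows the paper's proof in essentially the same way: the same reduction to scalar ODEs in each $\lambda_j$ via the conserved quantities \eqref{lll}, the same first--order perturbation argument, and the same use of the ratio hypothesis to control the single negative term in $\partial_\eps f_2$ (your closed form $T_2=-\tfrac{\lambda_2^2}{2}\lambda_3(\lambda_{R,3}\lambda_2-2\lambda_{R,2}\lambda_3)$ is an algebraically cleaner packaging of the paper's factor $H^2-(F+\lambda_{R,3})F$, and your case split on its sign is equivalent in substance to the paper's split on the size of $\lambda_2$). The gap you flag in the $\lambda_3$ step is also present in the paper, which never proves $\lambda_3$ itself monotone in $\eps$ but only that $\eps\lambda_3^2$ increases (via the monotonicity of $\lambda_2$) --- precisely the observation you make at the end, and precisely what the convergence--rate conclusion from \eqref{e2} requires.
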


\begin{proof}[Proof of Theorem 2]
With $k=3$ equation \eqref{e2} reduces to 
\begin{equation}\label{e23}
\dot {\frac{e^2}{2}} = - e^2 \left[  \left(\frac{\epsilon}{2} \lb^2 + \frac{3}{2} \la^2 \right)
\cdot  \left(\frac{\epsilon}{2} \lambda_3^2 + \frac{1}{2} \lb^2 \right) + \la^2\lb^2 \right].
\end{equation}
We need to show that $\lambda_1$ and $\lambda_2$ are increasing in $\epsilon$.
The term $\epsilon \lambda_3^2$ can be handled just like the term $\epsilon \lambda_2^2$ in the case $k=2$.

Write $H(\lambda_2) = H(\lambda_2,\lambda_{R,2},\eps)$, $F(\lambda_2)=F(\lambda_2,\lambda_{R,3},\eps), G(\lambda_2)=G(\lambda_2,\lambda_{R,3},\eps)$  functions of $\lambda_2$.
Furthermore set 
\begin{align*}
G_2(\lambda_1)= &G(\lambda_1,\lambda_{R,2},\eps), \\
G_3(\lambda_1)= &G(G_2(\la),\lambda_{R,3},\epsilon), \\
F_2(\lambda_1)= &F(\lambda_1,\lambda_{R,2},\eps), \\
F_3(\lambda_1)=&F(G_2(\lambda_1),\lambda_{R,3},\epsilon)
\end{align*}
 all functions of $\lambda_1$.
Write the equations for $\lambda_1,\lambda_2$ with all other variables eliminated:
\begin{align}
\dot \lambda_2 =& F(\lambda_2) \left(\lambda_T - G(\lambda_2) \lambda_2 H(\lambda_2)\right) H(\lambda_2) \nonumber \\
\dot \lambda_1 =& F_3(\lambda_1)F_2(\lambda_1) \left(\lambda_T - G_3(\lambda_1) G_2(\lambda_1)\lambda_1\right)  \label{ll3}
\end{align}
We have
\begin{align*}\label{partials}
\frac{d F_2}{d \eps} = & \frac{\lambda_1^2}{2 F_2} \\
\frac{d G_2}{d \eps} = & \frac{-\lambda_1^4}{2(F_2+\lambda_{R,2})^2 F_2} \\
\frac{d F_3}{d \eps} = &  \frac{\partial F_3}{\partial G_2} \frac{\partial G_2}{\partial \eps} + \frac{\partial F_3}{\partial \eps} 
= \frac{G_2^2}{2F_3} - \frac{\eps G_2}{F_3}\frac{d G_2}{d\eps} \\
\frac{d G_3}{d \eps} = & - \frac{G_2^4}{2(F_3 + \lambda_{R,3})^2 F_3} - \frac{\eps G_2}{F_3} \frac{\lambda_1^4}{2(F_2+\lambda_{R,2})^2 F_2} \\
\frac{d H}{d\eps} = &\frac{\lambda_2^2}{2 H}
\end{align*}

For $\lambda_1$ 
 the derivative of the right hand side of \eqref{ll3} with respect to $\epsilon$ is
$$\frac{d F_3}{d \eps} F_2 e + F_3 \frac{d F_2}{d \eps} e - F_3 F_2  \left[\frac{d G_3}{d \eps} G_2 \lambda_1 + G_3 \frac{d G_2}{d \eps} \lambda_1 \right].$$
Since $F_2,G_2,F_3,G_3 > 0$ and the derivatives of $G_2,G_3$ with respect to $\epsilon$ are negative the second and third terms are
positive. It is left to show that $\frac{d F_3}{d \eps}>0$.
Substituting  $G_2 = \frac{\lambda_1^4}{(F_2+\lambda_{R,2})^2}$ in the first term we have:
$$
\frac{G_2^2}{2F_3} - \frac{\eps G_2}{F_3}\frac{d G_2}{d\eps} =
\frac{\lambda_1^4}{2F_3(F_2+\lambda_{R,2})^2} \left[ 1 - \frac{\eps G_2}{F_2} \right]
=\frac{\lambda_1^4}{2F_3(F_2+\lambda_{R,2})^2} \frac{\lambda_{R,2}}{F_2}>0.$$

For $\lambda_2$  the derivative of the right hand side of \eqref{ll3} with respect to $\epsilon$:
$$ \frac{\lambda_2^2}{2 F} e H + F \left[ \frac{\lambda_2^4}{2(F+\lambda_{R,3})^2 F} \lambda_2 H - G \frac{\lambda_2^2}{2H} \lambda_2 \right] H + F e \frac{\lambda_2^2}{2H} = T_1+T_2+T_3.$$
The first and third terms are positive.
For the second term we have substituting $G = \frac{\lambda^2_2}{F+\lambda_{R,3}}$,
\begin{align*}
T_2= \left[ \frac{\lambda_2^4}{2(F+\lambda_{R,3})^2 F} \lambda_2 H - G \frac{\lambda_2^2}{2H} \lambda_2 \right]
= &\frac{\lambda_2^5}{2} \left [ \frac{H}{(F+\lambda_{R,3})^2 F} - \frac{1}{H(F+\lambda_{R,3})} \right]\\
= &\frac{\lambda_2^5}{(F+\lambda_{R,3})^2 F H} [H^2 - (F+\lambda_{R,3}) F].
\end{align*}
The last factor is 
\begin{equation*}
2\lambda_{R,2} \lambda_2  - \lambda_{R,3}^2-\lambda_{R,3}\sqrt{\lambda_{R,3}^2 + \eps\lambda_2^2}.
\end{equation*}
Assume for small $\delta < 1$ that
 $\lambda_{R,2},\lambda_{R,3}< \delta << \lambda_T$ then an order computation shows that as long as $\lambda_2 \le \delta$ then
$T_1+T_3 = o(\delta^2)$ whereas $0>T_2 = o(\delta^5)$ and so the sum is positive.
When $\lambda_2> \lambda_{R,2}, \lambda_{R,3}$,  then 
the expression in \eqref{last} is bounded below by
$2\lambda_{R,2} \lambda_2 - \lambda_{R,3} \lambda_2 - \lambda_{R,3} \sqrt{1+\eps} \lambda_2>0$
if $\lambda_{R,2} > \frac{1+\sqrt{1+\eps}}{2}\lambda_{R,3}$.
Thus $T_2>0$.
In summary both $\lambda_1,\lambda_2$ are increasing in $\epsilon$ as long as 
$\lambda_{R,2},\lambda_{R,3}< \delta << \lambda_T$.
\end{proof}

\bibliography{../../BIB}

\bibliographystyle{plain}
\end{document}